\title[Optimal Complexity in Decentralized Learning over Time-Varying networks]{Optimal Complexity in Non-Convex
Decentralized Learning over \\  Time-Varying Networks}
\newtheorem{assumption}{Assumption}
\newcommand{\EE}{\mathbb{E}}
\newcommand{\NN}{\mathbb{N}}
\newcommand{\PP}{\mathbb{P}}
\newcommand{\RR}{\mathbb{R}}
\newcommand{\cA}{{\mathcal{A}}}
\newcommand{\cC}{{\mathcal{C}}}
\newcommand{\cF}{{\mathcal{F}}}
\newcommand{\cI}{{\mathcal{I}}}
\newcommand{\cJ}{{\mathcal{J}}}
\newcommand{\cN}{{\mathcal{N}}}
\newcommand{\cO}{{\mathcal{O}}}
\newcommand{\cS}{{\mathcal{S}}}
\newcommand{\cV}{{\mathcal{V}}}
\newcommand{\cW}{{\mathcal{W}}}
\newcommand{\cX}{{\mathcal{X}}}
\newcommand{\va}{{\mathbf{a}}}
\newcommand{\vb}{{\mathbf{b}}}
\newcommand{\vg}{{\mathbf{g}}}
\newcommand{\vh}{{\mathbf{h}}}
\newcommand{\vx}{{\mathbf{x}}}
\newcommand{\vz}{{\mathbf{z}}}
\newcommand{\vW}{\mathbf{W}}
\newcommand{\bxi}{\boldsymbol{\xi}}
\newcommand{\eg}{\emph{e.g.}\xspace}
\newcommand{\ie}{\emph{i.e.}\xspace}
\newcommand{\one}{\mathds{1}}
\newcommand{\prog}{\mathrm{prog}}
\newcommand{\dist}{\mathrm{dist}}
\newcommand{\diam}{\mathrm{diam}}
\begin{document}

\maketitle

\begin{abstract}%
Decentralized optimization with time-varying networks is an emerging paradigm in machine learning. It saves remarkable communication overhead in large-scale deep training
and is more robust in wireless scenarios especially when nodes are moving. Federated learning can also be regarded as decentralized optimization with time-varying communication patterns alternating between global  averaging and local updates.

While numerous studies exist to clarify its  theoretical limits and develop efficient algorithms, it remains unclear what the optimal complexity is for non-convex decentralized stochastic optimization over time-varying networks. The main difficulties lie in how to gauge the effectiveness when transmitting messages between two nodes via time-varying communications, and how to establish the lower bound when the network size is fixed (which is a prerequisite in stochastic optimization). This paper resolves these challenges and establish the first lower bound complexity. We also develop a new decentralized algorithm to nearly attain the lower bound, showing the tightness of the lower bound and the optimality of our algorithm.


\end{abstract}


\section{Introduction}
\label{sec:introduction}

\textbf{Decentralized optimization.} 
Decentralized optimization is an emerging learning paradigm in which each node only communicates with its immediate neighbors per iteration. By avoiding the central server and maintaining a more balanced communication between each pair of connected nodes, decentralized approaches can significantly speedup the training process of large-scale machine learning models \cite{assran2019stochastic,gan2021bagua,bluefog}. Although decentralized optimization has been extensively studied in literature, 
its performance limits with {\em time-varying}  communication patterns has not been fully explored. This paper provides a better understanding in optimal complexity for non-convex decentralized stochastic optimization over time-varying communication networks. 

\vspace{1mm}
\noindent \textbf{Time-varying communication pattern.} Decentralized optimization over time-varying communication networks is ubiquitous in applications. In large-scale deep neural network training, sparse and time-varying network topologies such as one-peer exponential graph \cite{assran2019stochastic,ying2021exponential} and EquiRand \cite{song2022simple} endow decentralized learning with a state-of-the-art balance between communication efficiency and convergence rate. 
In wireless signal processing, time-varying topologies naturally emerge when the nodes (such as cellphones, drones, robots, etc.) are moving \cite{tu2010foraging,tu2011mobile}. Federated learning \cite{mcmahan2017communication,stich2019local} can  also be regarded as a special decentralized learning paradigm which admits a time-varying communication pattern alternating between global averaging and local updates.

\vspace{1mm}
\noindent \textbf{Prior results in theoretical limits.} A series of pioneering works have attempted to establish the optimal complexity in decentralized optimization over {\em static} communication networks. In the deterministic regime, \cite{scaman2017optimal,scaman2018optimal,sun2019distributed} clarified the theoretical limits and proposed algorithms to (nearly) attain these limits. In the stochastic regime, recent works \cite{lu2021optimal,yuan2022revisiting} have established the optimal complexity in the non-convex setting. However, there are few studies on theoretical limits in decentralized optimization over {\em time-varying} communication networks. A recent useful work \cite{kovalev2021lower} establishes the optimal complexity over time-varying networks for deterministic and strongly-convex problems. While this bound is inspiring, its analysis, as well as all existing results in literature to our knowledge, cannot be easily extended to the stochastic setting due to challenges  below.  

\vspace{1mm}
\noindent \textbf{Challenges.} 
When considering a static network topology, it is known that the optimal complexity in decentralized optimization is typically proportional to diameter $D$ of the static topology \cite{scaman2017optimal}. 
Clarifying how the diameter $D$ affects the algorithmic convergence is the key to justifying the influence of the communication network on the optimal convergence rate.  
However, it is unclear in literature how to gauge, or even define, the graph diameter for a sequence of time-varying networks.

Furthermore, this paper considers decentralized stochastic optimization where {\em the network size $n$ is a fixed constant}. A fixed $n$ is a prerequisite in distributed stochastic optimization which enables distributed algorithms to achieve the linear speedup in convergence rate $O(\sigma/\sqrt{nT})$ where $\sigma$ indicates the gradient noise and $T$ is the algorithmic iteration.
In decentralized deterministic optimization, however, size $n$ does not appear in the convergence rate. Thus, it does not need to be fixed  
and can be varied freely to simplify the lower-bound analysis. In fact, references  \cite{scaman2017optimal,scaman2018optimal,kovalev2021lower} all tune $n$ delicately to derive the optimal complexity for decentralized deterministic optimization over static or time-varying  networks. Therefore, the analysis in \cite{scaman2017optimal,scaman2018optimal,kovalev2021lower} cannot be extended to decentralized stochastic setting in which the network size $n$ is fixed. 


\vspace{1mm}
\noindent \textbf{Main results.} This paper overcomes the above two challenges and successfully establishes the optimal complexity for decentralized stochastic optimization over time-varying network topologies. 

\begin{itemize}
\item Inspired by the graph diameter of a static network topology, we introduce a novel {\em effective graph diameter} to gauge how efficient a message is transmitted between two farthest nodes via a sequence of time-varying decentralized communications. 

\item We provide the first lower bound complexity for decentralized non-convex  stochastic optimization over time-varying networks. The derivation of this lower bound is based on a novel family of {\em sun-shaped network topologies}. Given any fixed network size $n$, we can always construct a sequence of time-varying {sun-shaped topologies} that maintains the optimal relation between the effective graph diameter and the network connectivity. 

\item We prove that the established lower bound complexity can be nearly attained (up to logarithmic factors) by integrating multiple gossip communications \cite{liu2011accelerated,rogozin2021towards,yuan2021removing} and gradient accumulation \cite{scaman2017optimal,Rogozin2021AnAM,lu2021optimal} to the vanilla stochastic gradient tracking approach \cite{nedic2017achieving,di2016next,qu2018harnessing,xu2015augmented,lu2019gnsd}. It implies that our complexity bound is tight and the proposed  algorithm is nearly optimal.
\end{itemize} 
All established results in this paper as well as those of existing state-of-the-art decentralized learning algorithms over time-varying networks are listed in Table \ref{tab:nc-compare}.

\begin{table}[t]
\centering 
\caption{\small Rate comparison between different decentralized stochastic algorithms over time-varying networks. Parameter $n$ denotes the number of all computing nodes, $\beta\in [0,1)$ denotes the connectivity measure of the weight matrix, $\sigma^2$ measures the gradient noise, $b^2$ denotes data heterogeneity, and $T$ is the number of iterations. Other constants such as the initialization $f(x^{(0)}) - f^\star$ and smoothness constant $L$ are omitted for clarity. Logarithm factors are hidden in the $\tilde{O}(\cdot)$ notation.
}
\begin{tabular}{clll}
\toprule
      \textbf{Bound type}              & \textbf{Reference}                                                                           & \textbf{Gossip matrix}                     & \hspace{10mm}\textbf{Convergence rate}    \\ \midrule
      \vspace{1mm}
Lower & \textcolor{blue}{Theorem} \ref{thm:lower-bound-nc}                                               & \textcolor{blue}{$\beta \in[0,1-\frac{1}{n}]$} &         \textcolor{blue}{$\Omega\big( \frac{\sigma}{\sqrt{nT}}+ \frac{1}{T (1-\beta)} \big)$}                 \\\midrule
\multirow{5}{*}{Upper} & DSGD \cite{koloskova2020unified}                                    & $\beta \in [0, 1)$           &     $O\big( \frac{\sigma}{\sqrt{nT}} \hspace{-1mm}+\hspace{-1mm} \frac{\sigma^{\frac{2}{3}}}{T^{\frac{2}{3}}(1-\beta)^{\frac{1}{3}}}+\frac{b^{\frac{2}{3}}}{T^{\frac{2}{3}}(1-\beta)^{\frac{2}{3}}} \hspace{-1mm}+ \frac{1}{T(1-\beta)} \big)$           \\
                             & DSGT \cite{xin2020improved} & $\beta \in [0, 1)$           & $\tilde{O}\big( \frac{\sigma}{\sqrt{nT}} + \frac{\sigma^{\frac{2}{3}}}{T^{\frac{2}{3}}(1-\beta)}+\frac{1}{T(1-\beta)^2}  \big)$     \\
                             & {\color{blue}MC-DSGT}                                                                              & {\color{blue}$\beta \in [0, 1)$}           & {\color{blue}$\tilde{O}\big( \frac{\sigma}{\sqrt{nT}}+ \frac{1}{T (1-\beta)}\big)$}                      \\ \bottomrule

\end{tabular}
\vspace{-4mm}
\label{tab:nc-compare}
\end{table}

\vspace{1mm}
\noindent \textbf{Other related works.} Decentralized optimization can be tracked back to \cite{tsitsiklis1986distributed}.  Decentralized gradient descent \cite{nedic2009distributed,yuan2016convergence,lian2017can}, diffusion \cite{chen2012diffusion,sayed2014adaptive} and dual averaging \cite{duchi2011dual} are early popular decentralized methods. Other advanced variants  extend decentralized methods to data-heterogeneous scenarios \cite{tang2018d,xin2020improved,lu2019gnsd,alghunaim2021unified,koloskova2021improved}, adaptive momentum settings \cite{lin2021quasi,yuan2021decentlam,nazari2019dadam}, or asynchronous implementations \cite{lian2018asynchronous}. When the network topology is time-varying, reference \cite{kovalev2021lower} establishes optimal convergence rate under the deterministic and strongly-convex setting. References \cite{kovalev2021lower,li2021accelerated} develop decentralized methods with Nesterov acceleration to nearly achieve such optimal convergence rate. In the stochastic and non-convex setting, the convergence rate of decentralized SGD over general time-varying networks is clarified in \cite{koloskova2020unified}. Other references  \cite{ying2021exponential,song2022simple,wang2019matcha} study specific sparse and time-varying network topologies that can further save communication overheads in decentralized SGD. However, none of these works provides the optimal complexity for non-convex decentralized learning over time-varying networks.




\section{Problem setup}
\label{sec-setup}
\textbf{Problem setup.} Consider the following problem with a network of $n$ computing nodes:
\begin{align}\label{dist-opt}
{\color{black} \min_{x \in \mathbb{R}^d}\  f(x)=\frac{1}{n}\sum_{i=1}^n f_i(x) \quad \mbox{where} \quad f_i(x): = \mathbb{E}_{\xi_i \sim D_i} [F(x;\xi_i)].}
\end{align}
Function $f_i(x)$ is local to node $i$, and random variable $\xi_i$ denotes the local data that follows distribution $D_i$. Each local data distribution $D_i$ can be different across all nodes. 

\paragraph{Assumptions.} The optimal convergence rate is established under the following assumptions. 

\begin{itemize}
    \item \textbf{Function class.} We let the {function class $\cF_{L}^\Delta$} denote the set of all functions satisfying the following assumption for any dimension $d\in \NN_+$ and initialization point $x^{(0)}\in\RR^d$.
    \begin{assumption}[\sc Cost functions] \label{asp:nc}
	We assume each $f_{i}$ has $L$-Lipschitz gradient, {\ie},
	$$
	\left\|\nabla f_{i}(x)-\nabla f_{i}(y)\right\| \leq L\|x-y\|
	$$
	for all $i\in[n]$, $x, y \in \mathbb{R}^{d}$, and $f(x^{(0)})-\inf _{x \in \mathbb{R}^{d}} f(x)\leq \Delta $ with $f=\frac{1}{n}\sum_{i=1}^n f_i$.
    \end{assumption}

    \item \textbf{Gradient oracle class. } 
We assume each worker $i$ has  access to its local gradient $\nabla f_i(x)$ via a stochastic gradient oracle $O_i(x;\zeta_i)$ subject to independent randomness $\zeta_i$, \eg, the mini-batch sampling $\zeta_i\triangleq\xi_i\sim D_i$. We further assume that the output $O_i(x,\zeta_i)$ is an  unbiased estimator of the full-batch gradient $\nabla f_i(x)$ with a bounded variance. Formally, we let the {stochastic gradient oracle class $\cO_{\sigma^2}$} denote the set of all oracles $O_i$ satisfying Assumption \ref{asp:gd-noise}.
\begin{assumption}[\sc Gradient stochasticity]\label{asp:gd-noise}
We assume local gradient oracle $O_i$ satisfies
\begin{align*}
\EE_{\zeta_i}[O_i(x;\zeta_i)]=\nabla f_i(x)\quad \text{ and }\quad \EE_{\zeta_i}[\|O_i(x;\zeta_i)-\nabla f_i(x)\|^2]\leq \sigma^2
\end{align*}
 for any $x \in \RR^d$ and $i\in[n]$. 
\end{assumption}
    
\item \textbf{Decentralized communication.} Let $\cV=[n]$ denote the set of $n$ computing nodes. For any communication round $t\geq 0$, we assume nodes are connected through a time-varying communication network represented by a 
graph $G^t=(V,E^t)$, where $E^t\subseteq \{(j,i)\in\cV\times \cV:\,i\neq j\}$ is the set of links activated at round $t$. If a directed link $(j,i) \in E^t$, then node $j$ can transmit information to node $i$ at round $t$. 
In decentralized communication protocols, each node $i$ can only receive messages with its immediate neighbors via links in $E^t$. 

\item \textbf{Weight matrix class.} To characterize the decentralized communication in algorithm development, we associate each time-varying communication graph $G^t$ with a weight matrix $W^t$ (also known as the gossip matrix \cite{nedic2009distributed,yuan2016convergence}). As in \cite{kovalev2021lower,lu2021optimal,yuan2022revisiting}, we consider a sequence of time-varying weight matrices $\{W^t\}_{t=0}^\infty\subseteq\RR^{n\times n}$ satisfying Assumption \ref{asp:weight-matrix}.
\begin{assumption}[\sc Weight matrix]\label{asp:weight-matrix}
    For any $t\geq 0$,  $W^t=[w_{i,j}^t]_{i,j=1}^n$ satisfies
    \begin{enumerate}
    \item if $(j, i) \notin E^t$ and $i\neq j$, then $w^t_{i,j}=0$;
    \item $W^t\mathds{1}_n = \mathds{1}_n$ and $\one_n^\top W^t= \mathds{1}_n^\top$  where $\one_n=[1,\dots,1]^\top\in\RR^n$;
    \item there exists a fixed constant $\beta\in[0,1)$ such that $\|W^t - \mathds{1}_n\mathds{1}_n^\top/n\|_2 \leq  \beta$.
\end{enumerate}
\end{assumption}
Note that a weight matrix $W^t$ satisfying Assumption \ref{asp:weight-matrix} is not necessarily symmetric or positive semi-definite.  The constant $\beta$ is the {\em connectivity measure} that gauges how well the network topology $G^t$ is connected. Constant $\beta \to 0$ (which implies $W^t \to \frac{1}{n}\mathds{1}_n \mathds{1}_n^\top$) indicates a well-connected topology while $\beta \to 1$ (which implies $W^t \to I$) indicates a poor connection. We let $\cW_{n,\beta}$ denote the class of all weight matrices $W^t\in \RR^{n\times n}$ satisfying Assumption \ref{asp:weight-matrix}. 

    \item \textbf{Algorithm class.} We consider an algorithm $A$ in which each node $i$ assesses an unknown local function $f_i $ via the \emph{independent} stochastic gradient oracle $O_i(x;\,\zeta_i) \in \cO_{\sigma^2}$.  Each node $i$ running algorithm $A$ will maintain a local model copy $x^{(t)}_{i}$ at round $t$. 
    We assume $A$ to follow the partial averaging policy, \ie, each node communicates at round $t$ via protocol
    $$
    \setlength{\abovedisplayskip}{3pt}\setlength{\belowdisplayskip}{3pt}z_i = \sum_{j\in \mathcal{N}_i^t} w_{i,j}^t y_j, \quad \forall\,i \in [n]$$ with some $W^t = [w_{i,j}^t]_{i,j=1}^n \in \cW_{n, \beta}$ where  $y$ and $z$ are the input and output of the communication protocol. In addition, we assume $A$ to follow the zero-respecting policy \cite{carmon2020lower,carmon2021lower}. Informally speaking, the zero-respecting policy requires that the number of non-zero entries of local model copy $x_i^{(t)}$ can only be increased by either sampling its own stochastic gradient oracle or interacting with the neighboring nodes.
    We let $\cA_{\{W^t\}_{t=0}^\infty}$ be the set of all algorithms following the partial averaging and zero-respecting policies. 
\end{itemize}

With the above classes, this paper will clarify the following question: {\em Given loss functions $\{f_i\}_{i=1}^n\subseteq \cF_{L}^\Delta$, stochastic gradient oracles $\{O_i\}_{i=1}^n \subseteq  \cO_{\sigma^2}$, a sequences of time-varying networks $\{G^t\}_{t=0}^{\infty}$ and its associated weight matrices $\{W^t\}_{t=0}^{\infty}\subseteq \cW_{n,\beta}$, what is the optimal complexity to solve problem \eqref{dist-opt}, and what decentralized algorithm $A\in \cA_{\{W^t\}_{t=0}^\infty}$ can achieve it?}

\vspace{1mm}
\noindent \textbf{Notations.} We let $[n] := \{1,2,\cdots, n\}$. 
For any network $G=([n],E)$ and node $i\in[n]$, we let $\cN_G(i)$  denote $\{j:(j,i)\in E\text{ or }j =i\}$, \ie, the neighborhood set of node $i$ in network $G$. 
Similarly, for a subset of nodes $\cI\subseteq[n]$, we use $\cN_G(\cI)$ to denote its neighborhood set $\cup_{i\in\cI}\cN_G(i)$.



\section{Sun-shaped graphs and effective distance/diameter}
As we have discussed in the {\bf Challenge} paragraph in Section \ref{sec:introduction}, it is unknown in literature (1) how to gauge the graph diameter for a sequence of time-varying network topologies, and (2) how to develop time-varying network topologies that can maintain the optimal relation between graph diameter and the network connectivity when the network size $n$ is fixed. This section will resolve these two challenges by introducing a novel family of sun-shaped time-varying graphs.

\begin{definition}[\sc Sun-shaped  graph]\label{def:rl}
    Given any positive integers $n\geq 2$ and $\cC\subseteq[n]$, the sun-shaped graph over nodes $[n]$ with center set $\cC$, denoted by $\cS_{n,\cC}$, is an undirected graph in which the neighborhood $\cN_{\cS_{n,\cC}}(i)$ of node $i \in [n]$ is given by 
    \begin{equation*}
        \cN_{\cS_{n,\cC}}(i)=\begin{cases}
        [n]& \text{if }i\in\cC;\\
        \cC\cup\{i\}&\text{otherwise}.
        \end{cases}
    \end{equation*}
\end{definition}
The center set $\cC$ in $\cS_{n,\cC}$ constitutes a complete subgraph. Nodes in the complete set $[n]\backslash\cC$ are connected to each node in $\cC$, but there is no connection between any pair of nodes in  $[n]\backslash\cC$. Note that a sun-shaped graph $\cS_{n,\cC}$  with $|\cC|=1$ corresponds to a star graph while $|\cC|=n$ or $|\cC|=n-1$ corresponds to a  complete graph. $\cS_{n,\cC}$ can be regarded as an intermediate state between the star and complete graphs when $2 \le |\cC| \le n-2$, see the illustration in Figure \ref{fig:sun-graph}.

\begin{figure}[!t]
    \centering
    \includegraphics[width=0.24\textwidth]{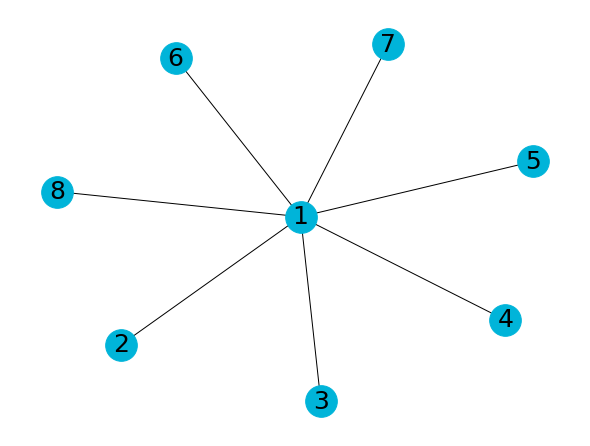}
    \includegraphics[width=0.24\textwidth]{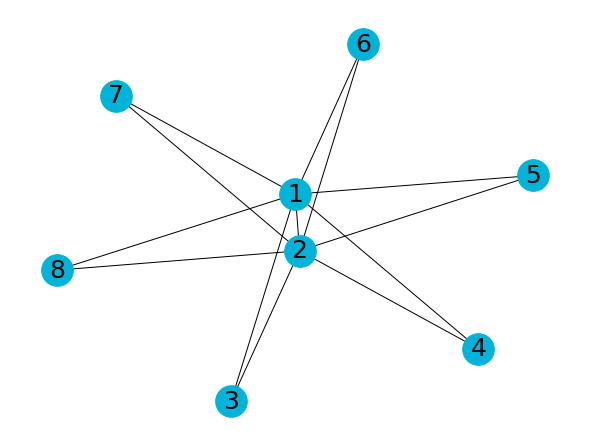}
    \includegraphics[width=0.24\textwidth]{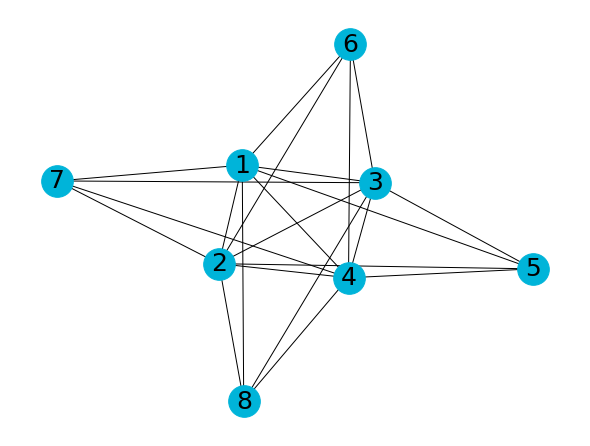}
    \includegraphics[width=0.24\textwidth]{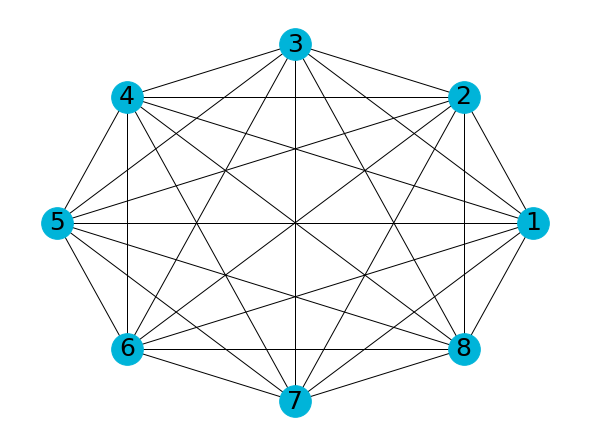}
    \caption{\small An illustration of the sun-shaped graph with size $8$ and center sets $[1],[2],[4],[7]$ (or $[8]$). It is observed that $\cS_{8,[1]}$ is a star graph while $\cS_{8,[8]}$ is a complete graph.}
    \label{fig:sun-graph}
    \vskip -7mm
\end{figure}

We next introduce effective graph diameter to gauge how efficient a message is transmitted between two farthest nodes via a sequence of time-varying decentralized communications. 
\begin{definition}[\sc Effective distance/diameter]
    We define the \emph{effective distance} $\dist_{\{G^t\}_{t=0}^\infty}(i,j)$ between two nodes $i\neq j$ over a sequence of networks $\{G^t\}_{t=0}^\infty$ to be the smallest number of rounds with which a message sent from node $i$ or $j$ at some round $t$ can be received by the other one via decentralized communications (\ie, communicating over $\{G^{t^\prime}\}_{t^\prime=t}^\infty$). Formally, we define
    \begin{align*}
        \dist_{\{G^t\}_{t=0}^\infty}(i,j):=\max\Big\{&\arg\min_{R}\{R:j\in {\cN}_{G^{t}}({\cN}_{G^{t+1}}(\cdots{\cN}_{G^{t+R-1}}(i) \cdots))\text{ for some }t\geq 0\},\\
        &\arg\min_{R}\{R:i\in {\cN}_{G^{t}}({\cN}_{G^{t+1}}(\cdots{\cN}_{G^{t+R-1}}(j) \cdots))\text{ for some }t\geq 0\}\Big\}.
    \end{align*}
    Similarly, we define the effective distance between two disjoint subsets of nodes $\cI_1,\,\cI_2\subsetneq[n]$ as
    \begin{equation*}
        \dist_{\{G^t\}_{t=0}^\infty}(\cI_1,\cI_2)=\min_{i\in\cI_1,\,j\in\cI_2}\{\dist_{\{G^t\}_{t=0}^\infty}(i,j)\}.
    \end{equation*}
    We define the {\em effective diameter} to be the largest effective distance between any two nodes, \ie, 
    \begin{equation*}
        \diam_{\{G^t\}_{t=0}^\infty}:=\max_{1\leq i\neq j\leq n}\{\dist_{\{G^t\}_{t=0}^\infty}(i,j)\}.
    \end{equation*}
\end{definition}
The definitions of effective distance and effective diameter are specific to the time-varying networks. We remark that when the networks are static, \ie, $G^t=G$ for any $t\geq 0$, then the effective distance/diameter reduces to  the canonical distance/diameter in a  static graph.


The following fundamental theorem establishes the relation between the effective distance with respect to a sequence of sun-shaped graphs and the connectivity measure $\beta$. 

\begin{theorem}\label{thm:sun-key}
Given a fixed $n\geq 2$, two disjoint subsets of nodes $\cI_1,\,\cI_2\subsetneq[n]$, and any $\beta \in [0, 1-\frac{1}{n}]$, there exists a sequence of sun-shaped graphs $\{\cS_{n,\cC^t}\}_{t=0}^\infty$ such that
\begin{itemize}
\item[(1)] the graph $\cS_{n,\cC^t}$ at round $t$ has an associated weight matrix $W^t \in \cW_{n,\beta}$, \ie, $W^t \in \RR^{n\times n}$, $\one_n^\top W^t=\one_n^\top$, $W^t\one_n=\one_n$, and $\|W^t - \frac{1}{n}\mathds{1}_n\mathds{1}_n^\top\|_2 \leq  \beta$;
\item[(2)] the effective distance between $\cI_1$ and $\cI_2$ satisfies $$\dist_{\{\cS_{n,\cC^t}\}_{t=0}^\infty}(\cI_1,\cI_2)= \Theta\left(\frac{1-(|\cI_1|+|\cI_2|)/n}{1-\beta}+1\right);$$ In particular, if $1-(|\cI_1|+|\cI_2|)/n=\Omega(1)$, then $\dist_{\{\cS_{n,\cC^t}\}_{t=0}^\infty}(\cI_1,\cI_2)=\Theta((1-\beta)^{-1})$.
\end{itemize}
\end{theorem}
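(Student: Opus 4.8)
The plan is to construct, for the given $n$, the disjoint sets $\cI_1,\cI_2$, and the value $\beta$, an explicit \emph{periodic} sequence of sun-shaped graphs whose effective distance can be read off directly. Write $m:=|\cI_1|+|\cI_2|$ and fix a block size $c:=\min\{n,\lceil n(1-\beta)/(1+\beta)\rceil\}$; since $\beta\le 1-\tfrac1n$ forces $n(1-\beta)\ge 1$, one checks $1\le c\le n$ and $c=\Theta(n(1-\beta))$. Let $k:=\lfloor(n-m)/c\rfloor$. If $k\ge 1$, pick $k$ pairwise disjoint blocks $P_1,\dots,P_k\subseteq[n]\setminus(\cI_1\cup\cI_2)$, each of size exactly $c$ (the fewer than $c$ leftover interior vertices are simply never used as centers), and set $\cC^t:=P_{(t\bmod k)+1}$ and $G^t:=\cS_{n,\cC^t}$. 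If $k=0$ or $m=n$ — a regime in which one can check $(1-m/n)/(1-\beta)=O(1)$ — instead set $G^t:=\cS_{n,[n]}$ (the complete graph) for all $t$.

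For claim (1), in the complete-graph case take $W^t:=\beta I_n+(1-\beta)\tfrac1n\one_n\one_n^\top$, which manifestly lies in $\cW_{n,\beta}$. For a sun-shaped graph $\cS_{n,\cC}$ with $|\cC|=c$ and $\ell:=n-c$ leaves, I would exhibit the two-parameter doubly stochastic family $W(p,q)$ putting weight $q$ on every center--leaf link, $p$ on every center--center link, $0$ on leaf--leaf pairs (forced by Assumption~\ref{asp:weight-matrix}(1)), and complementary mass on the diagonal. A direct eigen-decomposition shows that $M(p,q):=W(p,q)-\tfrac1n\one_n\one_n^\top$ has spectrum $\{0\}\cup\{1-cp-\ell q\}\cup\{1-cq\}\cup\{1-nq\}$ with multiplicities $1,\,c-1,\,\ell-1,\,1$. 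Taking $q:=2/(n+c)$ balances $|1-cq|=|1-nq|=(n-c)/(n+c)$, and taking $p$ so that $1-cp-\ell q=0$ then yields $\|M(p,q)\|_2=(n-c)/(n+c)$. Since our choice of $c$ gives $(n-c)/(n+c)\le\beta$, each $W^t:=W(p,q)$ is a valid element of $\cW_{n,\beta}$.

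For claim (2), fix a source $i\in\cI_1$ and an arbitrary start round $t\ge 0$, and let $S_r$ be the set of nodes having received $i$'s message after $r$ communication rounds, so $S_0=\{i\}$ and $S_{r+1}=\cN_{G^{t+r}}(S_r)$. The defining structure of sun-shaped graphs gives the dichotomy: if $\cC^{t+r}\cap S_r=\emptyset$ then every node of $S_r$ is a leaf of $G^{t+r}$ and $S_{r+1}=S_r\cup\cC^{t+r}$, whereas if $\cC^{t+r}\cap S_r\ne\emptyset$ then $S_{r+1}=[n]$. Because $i$ lies in no block and the centers $\cC^t,\cC^{t+1},\dots$ traverse $k$ \emph{distinct} blocks before the first repeat, the first alternative holds for $r=0,\dots,k-1$, giving $S_k=\{i\}\cup P_1\cup\cdots\cup P_k\subseteq[n]\setminus\cI_2$; hence $\cI_2$ is not reached within $k$ rounds. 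At round $t+k$ the center $\cC^{t+k}=\cC^t$ is a block already contained in $S_k$, so the second alternative fires and $S_{k+1}=[n]\supseteq\cI_2$. Thus $i$'s message reaches $\cI_2$ in exactly $k+1$ rounds, \emph{independently of $t$}, and the symmetric statement holds from $\cI_2$ to $\cI_1$; therefore $\dist_{\{\cS_{n,\cC^t}\}_{t=0}^\infty}(\cI_1,\cI_2)=k+1$ (and $=1$ in the complete-graph case). Finally $k+1=\lfloor(n-m)/c\rfloor+1=\Theta\big((n-m)/(n(1-\beta))+1\big)=\Theta\big((1-m/n)/(1-\beta)+1\big)$ by the bounds on $c$, which is the assertion; the ``in particular'' clause is the specialization $1-m/n=\Omega(1)$.

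The step I expect to be the main obstacle is arranging that the bound holds \emph{uniformly over the start round $t$}: the effective distance is the best case over all $t$ of how quickly a message can cross, so merely exhibiting one slowly-spreading message is worthless — every window of at most $k$ consecutive rounds must fail to carry a message from $\cI_1$ to $\cI_2$. This is exactly why the centers are cycled periodically through a fixed partition into blocks of size $c$: any $k$ consecutive centers are $k$ distinct blocks, which pins the reachable set to ``$\{i\}$ plus those blocks'' for $k$ rounds no matter where the window starts. A second, more mechanical, point is to verify that the value $(n-c)/(n+c)$ is genuinely attained by a legitimate (doubly stochastic, structure-respecting, possibly sign-indefinite) weight matrix rather than being only a bound; the two-parameter family reduces this to a finite eigenvalue computation in which one must account for all $n$ eigenvalues.
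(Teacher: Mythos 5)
Your proof is correct and follows essentially the same route as the paper's: cycle the center set through $\Theta\bigl(\tfrac{n-|\cI_1|-|\cI_2|}{n(1-\beta)}\bigr)$ disjoint blocks of size $\Theta(n(1-\beta))$ drawn from $[n]\setminus(\cI_1\cup\cI_2)$, use the leaf/center dichotomy to pin the reachable set for exactly (number of blocks) rounds uniformly in the start time, and convert the resulting floor/ceiling expression into the claimed $\Theta(\cdot)$. The only deviation is cosmetic: the paper certifies the spectral bound via the Laplacian weights $W^t=I_n-\tfrac{\delta}{n}L(\cS_{n,\cC^t})$ with block size $\lceil n(1-\beta)\rceil$, whereas you use a hand-tuned doubly stochastic two-parameter matrix achieving norm $(n-c)/(n+c)$, which merely changes the block size by a constant factor.
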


\section{Lower Bound}
With the help of Theorem \ref{thm:sun-key}, we are ready to establish the lower bound for non-convex decentralized stochastic optimization over time-varying networks. All proof details are in Appendix \ref{app:lower-bounds}.
\begin{theorem}\label{thm:lower-bound-nc}
For any $L>0$, $n\geq 2$, $\beta \in[0,1-\frac{1}{n}]$,  and $\sigma>0$, there exists a set of loss functions $\{f_i\}_{i=1}^n \subseteq \mathcal{F}_{L}^\Delta$, a set of stochastic gradient oracles $\{O_i\}_{i=1}^n \subseteq \mathcal{O}_{\sigma^{2}}$, and a sequence of weight matrices $\{W^t\}_{t=0}^\infty \subseteq \mathcal{W}_{n,\beta}$ resulted from the sun-shaped graphs, such that it holds for the output $\hat{x}$ of any  $A \in \mathcal{A}_{\{W^t\}_{t=0}^\infty}$ starting form $x^{(0)}$ that 
\begin{align}\label{eqn:lower-bound-nc}
  \mathbb{E}[\|\nabla f(\hat{x})\|^2] = \Omega\left(\left(\frac{\Delta L\sigma^2}{nT}\right)^\frac{1}{2}+ \frac{\Delta L}{T(1-\beta)}\right).
\end{align}
\end{theorem}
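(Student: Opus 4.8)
\textbf{Proof proposal for Theorem \ref{thm:lower-bound-nc}.}

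The plan is to decouple the two terms in the lower bound and prove each against a worst-case instance built from sun-shaped networks. The first term $(\Delta L\sigma^2/(nT))^{1/2}$ is the standard statistical barrier: it should follow from the classical single-node (or fully-connected) non-convex stochastic lower bound of Arjevani et al. / Carmon et al., applied to the averaged iterate. Concretely, I would take all $f_i$ equal to a single hard instance $f$ drawn from $\cF_L^\Delta$ on which any zero-respecting stochastic first-order method with $nT$ total oracle calls cannot drive $\EE\|\nabla f(\hat x)\|^2$ below $\Omega((\Delta L\sigma^2/(nT))^{1/2})$, and pair it with the trivial fully-connected weight matrices ($W^t=\frac1n\one_n\one_n^\top$), which lie in $\cW_{n,\beta}$ for every $\beta\ge 0$. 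Since at each round the $n$ nodes collectively make $n$ independent oracle queries, after $T$ rounds the information available is dominated by that of a centralized method with $nT$ samples, so the classical bound transfers. This step is routine modulo carefully checking that the partial-averaging + zero-respecting algorithm class reduces to the centralized first-order class on a common hard function.

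The second, network-dependent term $\Delta L/(T(1-\beta))$ is the heart of the argument and is where Theorem \ref{thm:sun-key} enters. The plan is to use a "progress-through-the-graph" hard instance in the spirit of Nesterov's chain function adapted to the decentralized setting (as in \cite{scaman2017optimal} and, for the non-convex case, \cite{lu2021optimal,yuan2022revisiting}): split the coordinates of a chain-type function $f(x)=\sum_k h_k(x)$ into two blocks assigned to two node groups $\cI_1$ and $\cI_2$, so that the even-indexed "links" of the chain can only be activated by nodes in $\cI_1$ and the odd-indexed ones only by nodes in $\cI_2$. By the zero-respecting property, making the $(k+1)$-st coordinate nonzero requires the $k$-th coordinate to have already become nonzero somewhere, and by the partial-averaging structure this information can only travel from one block to the other through actual communication. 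I would then invoke Theorem \ref{thm:sun-key} with these $\cI_1,\cI_2$ (chosen with $|\cI_1|,|\cI_2|=\Theta(n)$, so $1-(|\cI_1|+|\cI_2|)/n=\Omega(1)$) to obtain a time-varying sequence of sun-shaped graphs whose weight matrices satisfy Assumption \ref{asp:weight-matrix} with the prescribed $\beta$ and for which $\dist_{\{\cS_{n,\cC^t}\}}(\cI_1,\cI_2)=\Theta((1-\beta)^{-1})$. Hence each "hop" of information between the two blocks costs $\Theta((1-\beta)^{-1})$ rounds, so after $T$ rounds at most $O(T(1-\beta))$ chain links can be activated; choosing the chain length $\asymp T(1-\beta)$ and scaling the function so that as long as the last link is inactive the averaged iterate has $\|\nabla f(\hat x)\|^2=\Omega(\Delta L/(T(1-\beta)))$ yields the claim. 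Combining the two instances (via the standard "$\max$ of two lower bounds is $\Omega$ of their average" trick, e.g. by running the adversary that picks whichever instance the algorithm does worse on, or by a direct product construction) gives \eqref{eqn:lower-bound-nc}.

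The main obstacle I anticipate is making the information-propagation argument fully rigorous under the combination of stochastic oracles, the zero-respecting policy, and the partial-averaging communication model simultaneously — in particular, ruling out that randomness in the oracles lets a node "guess" a coordinate it has not legitimately received. This is the place where one must carefully track the support of each local iterate $x_i^{(t)}$ as a random set and show, by induction on $t$, that a coordinate belonging to block $\cI_2$'s responsibility cannot be in $\mathrm{supp}(x_i^{(t)})$ for $i\in\cI_1$ until at least $\dist_{\{\cS\}}(\cI_1,\cI_2)$ rounds after it first appeared in any $\cI_2$-node. The usual fix (going back to \cite{carmon2020lower}) is to use a "robust zero-chain" function whose $k$-th component gradient vanishes unless both coordinates $k-1$ and $k$ are already "large", which makes the activation deterministic given the supports; I would adopt that device and then the support-tracking induction becomes clean. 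A secondary, more bookkeeping-type difficulty is verifying that the sun-shaped weight matrices from Theorem \ref{thm:sun-key} — which need not be symmetric — are compatible with whatever normalization the hard-instance construction wants; since Assumption \ref{asp:weight-matrix} already allows non-symmetric doubly-stochastic matrices and the lower-bound argument only uses the support pattern $E^t$ and the consensus-rate bound $\beta$, this should go through without extra work.
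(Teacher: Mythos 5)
Your proposal matches the paper's proof in essentially every respect: the same two-instance decomposition, the same homogeneous zero-chain construction with $nT$ aggregate oracle calls for the statistical term, and the same even/odd split of the chain across two $\Theta(n)$-sized node groups $\cI_1,\cI_2$ combined with Theorem \ref{thm:sun-key} for the $\Delta L/(T(1-\beta))$ term. The only remark is that the paper sidesteps your anticipated ``main obstacle'' in the second instance by simply using noiseless oracles there ($O_i(x)=\nabla f_i(x)$), reserving the stochastic (Bernoulli-masked) oracle and its progress-probability bound for the first instance, where the network plays no role.
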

\begin{remark}
While the lower bound is established for $\beta \in [0,1-1/n] \subset [0, 1)$, it approaches to $[0, 1)$ as $n$ goes large. 
 Such interval is broad enough to cover most weight matrices (generated through the Laplacian rule $W = I - L/d_{\max}$) resulted from common topologies such as grid, torus, hypercube, exponential graph, complete graph, Erdos-Renyi graph, geometric random graph, etc. whose $\beta$ lies in the interval $[0, 1-1/n]$ when $n$ is sufficiently large. 
 \end{remark}

\section{Upper Bound}


This section presents a decentralized algorithm that achieves the lower bound established in Theorem \ref{thm:lower-bound-nc} up to logarithmic factors. The new algorithm is a direct extension of the vanilla decentralized stochastic gradient tracking (DSGT) \cite{xin2020improved,lu2019gnsd}. Inspired by the algorithm development in \cite{lu2021optimal,kovalev2021lower}, we add two additional components to DSGT: gradient accumulation and multiple-consensus communication. The main recursions are listed in Algorithm \ref{alg:MG-DSGT} which utilizes the fast gossip average step \cite{liu2011accelerated} in Algorithm \ref{alg:MG}. We call the new algorithm as MC-DSGT where ``MC'' indicates ``multiple consensus''. All proofs are in Appendix \ref{app:uppe-bound}. 
\begin{algorithm}[t]
	\caption{Decentralized Stochastic Gradient Tracking with Multiple Consensus (MC-DSGT)}
	\label{alg:MG-DSGT}
	\begin{algorithmic}
		\STATE \noindent {\bfseries Input:} Initialize $x_i^{(0)}=x^{(0)}$ and $h_i^{(0)}=\frac{1}{nR}\sum_{i=1}^n\sum_{r=0}^{R-1}O_i(x^{(k+1)};\zeta_i^{(k+1,r)})$ for any $i\in [n]$; initialize $\vx^{(0)} = [x_1^{(0)},\cdots,x_n^{(0)}]^\top$, $\vh^{(0)} = [h_1^{(0)},\cdots,h_n^{(0)}]^\top$,  and $\tilde{\vg}^{(0)} = \vh^{(0)}$;
  the decentralized gossip communication rounds $R$
		\FOR{$k=0,\cdots,K-1$}
		\STATE Update $\vx^{(k+1)}=\textbf{Multi-Consensus}(\vx^{(k)}-\gamma \vh^{(k)}, 2kR, (2k+1)R)$
		\STATE Query stochastic gradients $\tilde{g}_i^{(k+1)}=\frac{1}{R}\sum_{r=0}^{R-1}O_i(x_i^{(k+1)};\zeta_i^{(k+1,r)})$ at each node $i$
		\STATE Update $\vh^{(k+1)}=\textbf{Multi-Consensus}(\vh^{(k)}+ \tilde{\vg}^{(k+1)}-\tilde{\vg}^{(k)},(2k+1)R, (2k+2)R)$
		\ENDFOR 
	\end{algorithmic}
\end{algorithm}
\begin{algorithm}[t]
	\caption{$\vz^{(t_2)}$ = Multi-Consensus($\vz^{(t_1)}, t_1, t_2$)}
	\label{alg:MG}
	\begin{algorithmic}
		\STATE \noindent {\bfseries Input:} Variable $\vz^{(t_1)}$; index $t_1$ and $t_2$
		\FOR{$t=t_1,\cdots,t_2-1$}
		\STATE Update $\vz^{(t+1)}= W^{t}\vz^{(t)}$
		\ENDFOR 
		\RETURN Variable $\vz^{(t_2)}$
	\end{algorithmic}
\end{algorithm}
Since each node takes $R$ gradient queries and $R$ gossip communications at round $k$, it holds that $T = K R$ when MC-DSGT finishes after $K$ rounds. The following theorems clarify the convergence rate of MC-DSGT where $T = KR$. 

\begin{theorem}\label{thm:MG-DSGT-rate-nc}
Given $L>0$, $n\geq 1$, $\beta \in[0,1)$, $\sigma>0$,  by choosing the learning rate $\gamma$ as in \eqref{eqn:lr}, 
the convergence of Algorithm \ref{alg:MG-DSGT} can be bounded for any $\{f_i\}_{i=1}^n \subseteq \cF_{L}^\Delta$ and any $\{W\}_{t=0}^\infty\subseteq \cW_{n,\beta}$ that
\begin{equation*}
    \frac{1}{K+1}\sum_{k=0}^{K} \mathbb{E}[\|\nabla f(\bar{x}^{(k)})\|^2]=O\left(\left(\frac{\Delta L\sigma^2}{nT}\right)^\frac{1}{2}+\frac{R\Delta L}{T}+\left(\frac{\rho^2\Delta^2 L^2R\sigma^2}{(1-\rho)^3T^2}\right)^\frac{1}{3}+\frac{\rho^2R\Delta L}{T(1-\rho)^2}\right)
\end{equation*}
where $\rho\triangleq \beta^R\in[0,1)$, $\bar{x}^{(k)} = \frac{1}{n}\sum_{i=1}^n x_i^{(k)}$, and $T=KR$ is the total number of gradient queries and gossip communications at each node. If we further set $R$ as in \eqref{eqn:R}, then the rate becomes
\begin{equation}\label{eqn:MG-DSGD-rate-nc}
\frac{1}{K+1}\sum_{k=0}^{K} \mathbb{E}[\|\nabla f(\bar{x}^{(k)})\|^2] = \tilde{O}\left(\left(\frac{\Delta L\sigma^2}{nT}\right)^\frac{1}{2}+ \frac{\Delta L}{T(1-\beta)}\right).
\end{equation}
\end{theorem}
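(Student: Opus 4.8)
The plan is to prove Theorem~\ref{thm:MG-DSGT-rate-nc} by treating MC-DSGT as a ``compressed'' gradient tracking scheme in which each outer round $k$ corresponds to one ordinary DSGT step but with a gossip matrix whose mixing rate is $\rho=\beta^R$ rather than $\beta$, and with gradients that are averaged over $R$ i.i.d.\ samples so that their effective variance is $\sigma^2/R$. First I would set up the standard linear-algebra recursions for the averaged iterate $\bar x^{(k)}$, the consensus error $\|\vx^{(k)}-\one_n\bar x^{(k)\top}\|^2$, and the gradient-tracking error $\|\vh^{(k)}-\one_n\bar h^{(k)\top}\|^2$ (equivalently the deviation of the tracked gradient from the true average gradient). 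The key structural fact is that $R$ consecutive applications of matrices in $\cW_{n,\beta}$ compose into a single operator that still fixes $\one_n$ on both sides and has spectral norm (on the orthogonal complement of $\one_n$) at most $\beta^R=\rho$; this is immediate from submultiplicativity of $\|\cdot\|_2$ together with property~(2) of Assumption~\ref{asp:weight-matrix}, so the $\mathrm{Multi\text{-}Consensus}$ subroutine is exactly a contraction by $\rho$. Likewise, by Assumption~\ref{asp:gd-noise} and independence, $\tilde g_i^{(k+1)}$ is an unbiased estimate of $\nabla f_i(x_i^{(k+1)})$ with variance at most $\sigma^2/R$.

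Next I would carry out the descent-lemma computation: using $L$-smoothness of $f$ and the exact averaging property $\one_n^\top W^t=\one_n^\top$ (so the mean of $\vh$ is preserved by gossip and equals a running average of the sampled gradients via the tracking update), one gets the usual inequality
\begin{equation*}
\EE f(\bar x^{(k+1)}) \le \EE f(\bar x^{(k)}) - \tfrac{\gamma}{2}\EE\|\nabla f(\bar x^{(k)})\|^2 + \tfrac{\gamma}{2}\EE\|\text{tracking error}\|^2 + \tfrac{L\gamma^2}{2}\cdot\tfrac{\sigma^2}{nR} + (\text{lower-order}),
\end{equation*}
after which the three ``error'' quantities are controlled by a coupled linear system whose contraction factor is $\rho$ and whose forcing terms involve $\gamma^2 L^2$ times the consensus/tracking errors and $\gamma^2\sigma^2/R$. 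Summing this system over $k=0,\dots,K$, bounding the geometric series by $1/(1-\rho)$ (and a second application by $1/(1-\rho)^2$ for the tracking error since it is driven by the consensus error which is itself driven at rate $1/(1-\rho)$), and dividing by $K+1$ yields, for $\gamma$ small enough that the $\gamma^2L^2/(1-\rho)^2$ coefficients are dominated, a bound of the form $\frac{1}{\gamma T/R}\,\Delta + \gamma L\frac{\sigma^2}{nR} + \gamma^2 L^2\frac{\sigma^2}{R(1-\rho)^3} + \gamma^2 L^2 \frac{\Delta/(\gamma T/R)}{(1-\rho)^2}$ on $\frac{1}{K+1}\sum_k\EE\|\nabla f(\bar x^{(k)})\|^2$. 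Optimizing $\gamma$ over the constraint $\gamma \lesssim (1-\rho)^2/L$ balances the four terms and produces exactly the four-term bound stated in the theorem with $\rho=\beta^R$, $T=KR$. Finally, plugging the prescribed choice of $R$ from \eqref{eqn:R}, which is taken as $R=\Theta\!\big(\tfrac{1}{1-\beta}\log(\cdot)\big)$ so that $\rho=\beta^R$ is a small constant (e.g.\ $\rho\le \tfrac12$) and $R/(1-\rho)^2=\tilde O(1/(1-\beta))$, collapses the last two terms into the $\tilde O\!\big(\Delta L/(T(1-\beta))\big)$ term and the third term into something dominated by the first two, giving \eqref{eqn:MG-DSGD-rate-nc}.

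The step I expect to be the main obstacle is closing the coupled recursion for the consensus and gradient-tracking errors with the correct powers of $(1-\rho)$: the tracking error at round $k+1$ depends on the gradient \emph{differences} $\tilde g^{(k+1)}-\tilde g^{(k)}$, which in turn contain $\nabla f$ evaluated at $x_i^{(k+1)}$ versus $x_i^{(k)}$ (hence the consensus error and the $\gamma$-sized drift of $\bar x$) plus fresh noise, so one must carefully track how a perturbation propagates through one gossip contraction and one gradient-tracking update without losing a factor of $1-\rho$. Getting the $(1-\rho)^{-3}$ in the third term and the $(1-\rho)^{-2}$ in the fourth term exactly right — as opposed to an over-pessimistic $(1-\rho)^{-4}$ — is what makes the final rate match the lower bound after optimizing $R$, and it requires using that the noise injected per round is $\sigma^2/R$ and that a Young's-inequality split with parameter tuned to $(1-\rho)$ is used when separating ``old error'' from ``new forcing'' in each of the two error recursions. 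The remaining algebra (the descent lemma, the geometric sums, the final $\gamma$ and $R$ optimization) is routine once this linear system is established with the right constants.
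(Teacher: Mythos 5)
Your proposal follows essentially the same route as the paper's proof: the paper also treats the $R$-fold gossip product $\vW^{(k)}_R=\prod_{t=kR}^{(k+1)R-1}W^t$ as a single contraction with $\|\vW^{(k)}_R-\tfrac{1}{n}\one_n\one_n^\top\|_2\le\beta^R=\rho$, uses the $\sigma^2/R$ effective variance from gradient accumulation, proves a descent lemma plus a coupled $2\times 2$ linear recursion for the consensus errors $\|\Pi\vx^{(k)}\|_F^2$ and $\|\Pi\vh^{(k)}\|_F^2$ (with Young's-inequality parameters tuned to $1-\rho^2$, exactly the step you flag as delicate), sums via $(I-M)^{-1}$, and then optimizes $\gamma$ as in \eqref{eqn:lr} and $R$ as in \eqref{eqn:R}. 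The structure, the source of each of the four terms, and the final collapse under the logarithmic choice of $R$ all match the paper's argument, which is itself an adaptation of the static-matrix DSGT analysis.
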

The rate \eqref{eqn:MG-DSGD-rate-nc} matches with the lower bound \eqref{eqn:lower-bound-nc} up to logarithm factors. Therefore, our established lower bound is tight and hence optimal. The comparison between MC-DSGT with other state-of-the-art algorithms for non-convex decentralized stochastic optimization is listed in Table \ref{tab:nc-compare}.

\section{Experiments}
We consider the logistic regression with a non-convex regularization term \cite{xin2020improved,Antoniadis2011PenalizedLR}. The problem formulation is given by $\min_x\frac{1}{n}\sum_{i=1}^nf_i(x)+\rho r(x)$ where
\begin{equation}\label{eqn:nvwegdv}
    f_i(x)=\frac{1}{m}\sum_{j=1}^m \ln(1+\exp(-y_{i,j}\langle h_{i,j},x\rangle)),\quad r(x)=\sum_{k=1}^d\frac{[x]_k^2}{1+[x]_k^2},
\end{equation}
$[x]_k$ denotes the $k$-the entry of $x\in\RR^d$, $\{(h_{i,j},y_{i,j})\}_{j=1}^m$ is the local dataset at node $i$ where $h_{i,j}\in\RR^d$, $y_{i,j}\in\{\pm1\}$ 
is a feature vector and  label, respectively.  The regularization $r(x)$ is a smooth but non-convex function and $\rho>0$ is the regularization weight.

\begin{figure}[!t]
    \centering
    \includegraphics[width=0.3\textwidth]{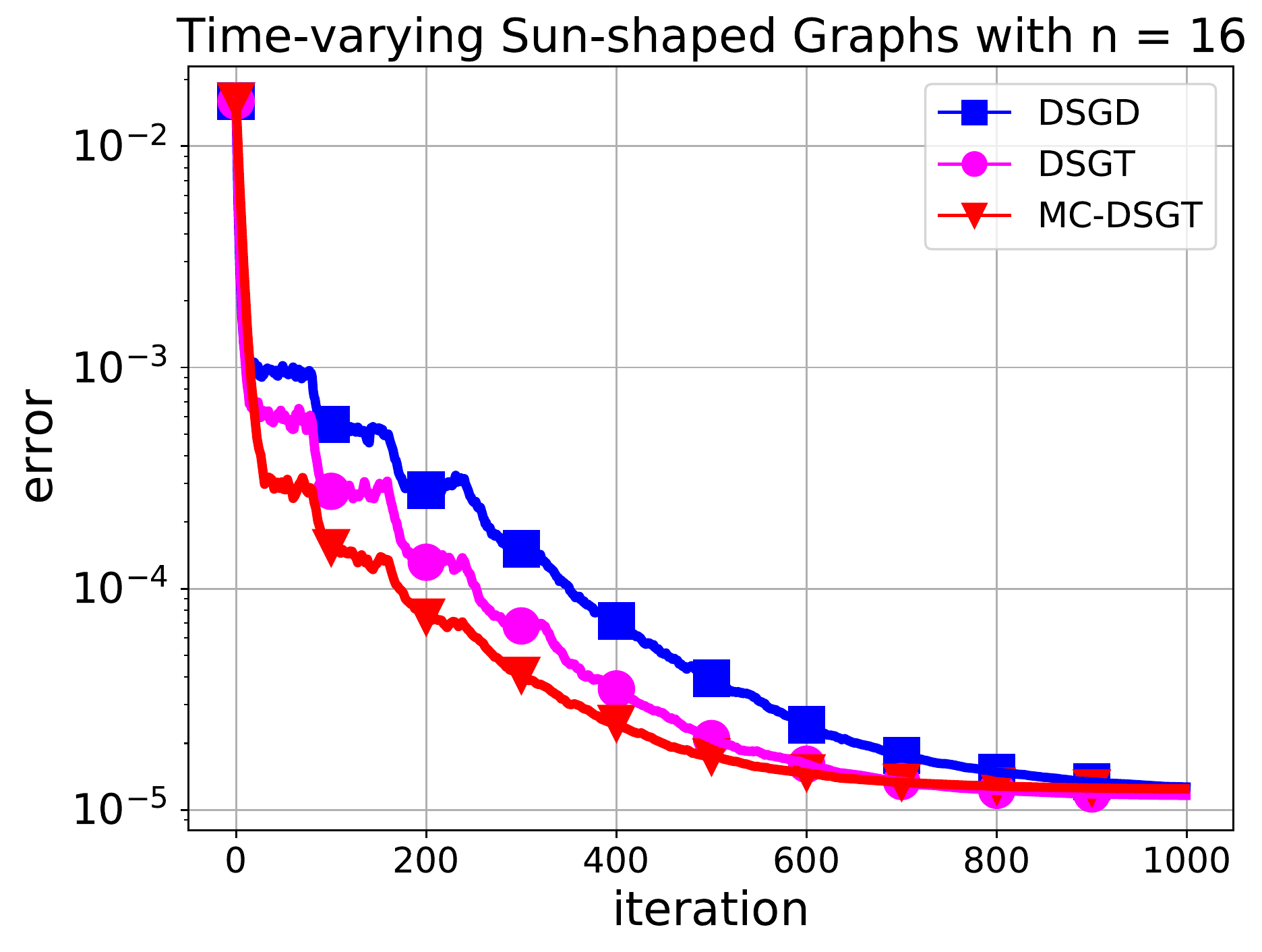}
    \quad 
    \includegraphics[width=0.3\textwidth]{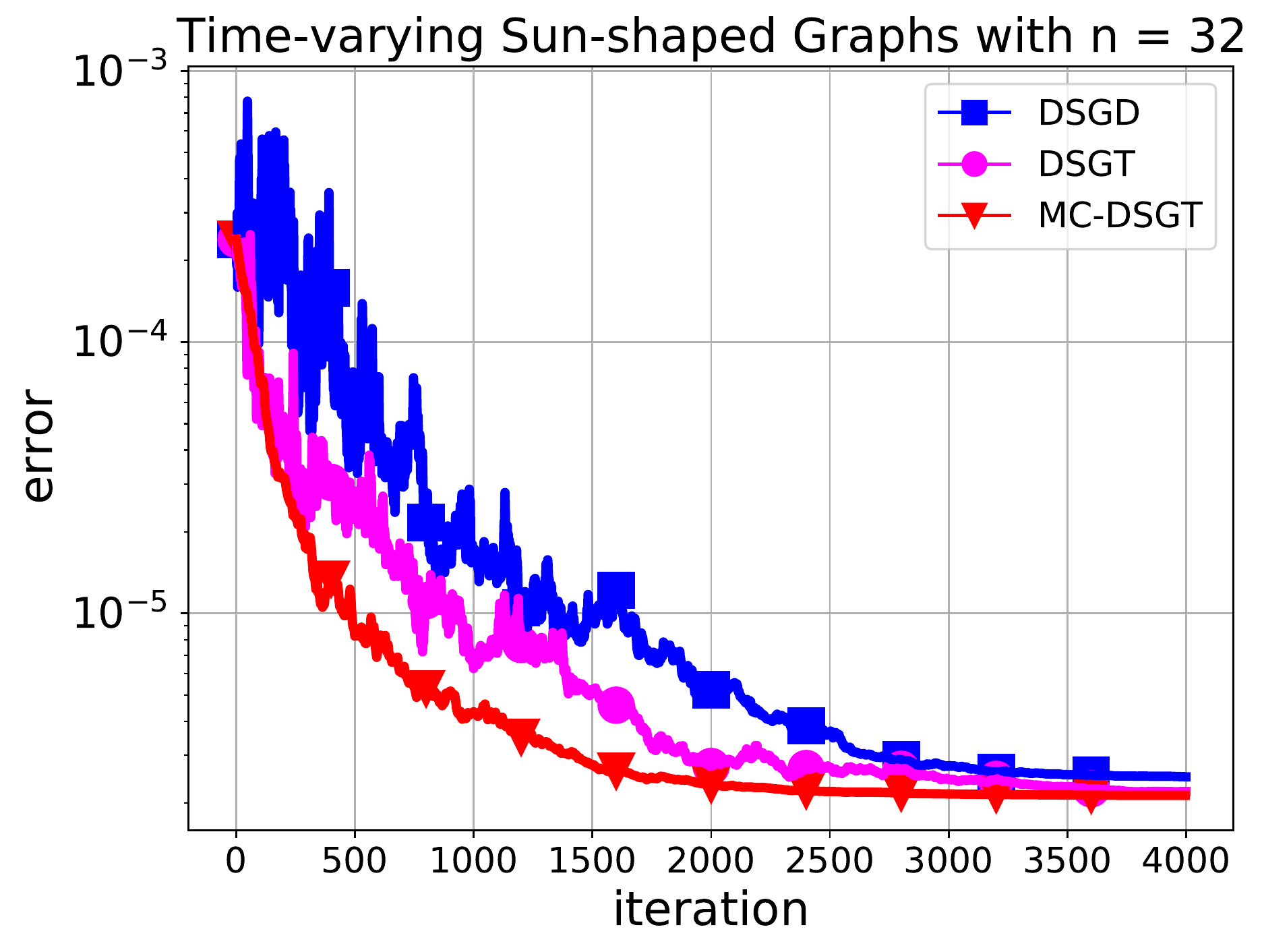}
    \vskip -3mm
    \caption{\small Performance of different stochastic algorithms to solve problem \eqref{eqn:nvwegdv}. The left plot is with MNIST, and the right plot is with COVTYPE.binary.}
    \label{fig:nc-simulation}
    \vskip -7mm
\end{figure}
We consider two real datasets: MNIST and COVTYPE.binary. We binarize MNIST dataset by considering datapoints with labels $2$ and $4$.  The
regularization weight $\rho$ is chosen as $0.2$ (MNIST) and $0.015$ (COVTYPE.binary).
We partition the two datasets non-uniformly such that a half of the nodes contain $80\%$ positive datapoints while the other half hold $80\%$ negative datapoints. We compare decentralized stochastic gradient descent (DSGD) \cite{koloskova2020unified}, decentralized stochastic gradient tracking (DSGT) \cite{xin2020improved} and Algorithm \ref{alg:MG-DSGT} (MC-DSGT) with random time-varying sun-shaped graphs with $(n,|\cC|)$ equal to $(16,1)$ for MNIST and $(32,4)$ for COVTYPE.binary. We set $R=2$ and $4$ in MC-DSGT for MNIST and COVTYPE.binary, repspectively

The performance of algorithms over MNIST and COVTYPE.binary is illustrated in the left and right plot in Figure \ref{fig:nc-simulation}, respectively. The error metric is taken as $\|\nabla f(\bar{x})\|^2$  with $\bar{x}=\frac{1}{n}\sum_{i=1}^nx_i^{(k)}$. In both experiments, we find the convergence rate as well as the robustness to time-varying network topology of MC-DSGT outperforms DSGD and DSGT, which coincides with our theory.

\section{Conclusion}
This paper provides the first optimal complexity for non-convex decentralized stochastic optimization over time-varying networks. We also generalize DSGT with multiple consensus under time-varying networks to match the optimal bound up to logarithm factors. Future works include establishing the optimal rate for (strongly) convex stochastic scenarios over time-varying networks.


\bibliography{references}

\appendix

\section{Sun-shaped Graph}
\begin{proof}[Proof of Theorem \ref{thm:sun-key}]
It is easy to see that when $|\cI_1|+|\cI_2|=n$, $\dist_{\{\cS_{n,\cC^t}\}_{t=0}^\infty}(\cI_1,\cI_2)=1$ for any graphs $\{\cS_{n,\cC^t}\}_{t=0}^\infty$. Thus in this case, we can simply let $\cC^t=[n]$ and $W^t=\beta I_n+(1-\beta)\one_n\one_n^\top$ for any $t\geq 0$. It is easy to see that $W^t\in\cW_{n, \beta}$.

Next we consider $|\cI_1|+|\cI_2|<n$. Let $k= \lceil n(1-\beta) \rceil\in[1,n]$. 

Case 1. If $k=n$, \ie, $0\leq \beta <\frac{1}{n}$, then we again let $\cC^t=[n]$  with associate weight matrix $W^t=\beta I_n+(1-\beta)\one_n\one_n^\top$ for all $t\geq 0$. It is easy to see that 
\begin{equation*}
    \dist_{\{\cS_{n,\cC^t}\}_{t=0}^\infty}(\cI_1,\cI_2)=1=\Theta(1)=\Theta\left(\frac{1-(|\cI_1|+|\cI_2|)/n}{1-\beta}+1\right)
\end{equation*}
where the last identity is because $0\leq 1-(|\cI_1|+|\cI_2|)/n\leq 1$ and $(1-\beta)^{-1}=\Theta(1)$.

Case 2. If $1\leq k\leq n-1$, then $\frac{1}{n}\leq \beta \leq 1-\frac{1}{n}$.
Let $\cJ^0,\dots,\cJ^{p-1}$ with $p=\lfloor (n-|\cI_1|-|\cI_2|)/k\rfloor$ be disjoint subsets of $[n]\backslash(\cI_1\cup \cI_2)$ such that each $\cJ^q$ ($0\leq q\leq p-1$) exactly contains $k$ nodes. Such $\{\cJ^{q}\}_{q=0}^{p-1}$ always exists due to  $p\times k\leq n-|\cI_1|-|\cI_2|$. Now let $\cC^t=\cJ^{t\text{ mod }p}$, \ie,
\begin{equation*}
    \{\cS_{n,\cC^t}\}_{t=0}^\infty=\{\cS_{n,\cJ^0},\dots,\cS_{n,\cJ^{p-1}},\cS_{n,\cJ^0}\dots,\cS_{n,\cJ^{p-1}},\cS_{n,\cJ^0},\dots\}.
\end{equation*}
It is easy to see that for any center set $\cC$ with $|\cC|=k$, the Laplacian $L(\cS_{n,\cC})$ of graph $\cS_{n,\cC}$ has eigenvalues: 
\begin{equation*}
    0,\,\underbrace{k,\,\dots,\,k}_{(n-k-1)\text{-folds}}, \,\underbrace{n,\,\dots,\,n}_{k\text{-folds}}.
\end{equation*}
We thus let the associated weight matrices to be $W^t=I_n-\frac{\delta}{n}L(\cS_{n,\cC^t})$ with $\delta ={n(1-\beta)}/{\lceil n(1-\beta) \rceil}\in(0,1]$ for any $t\geq 0$. Since $\delta <1$, $\{W^t\}_{t=0}^\infty$ are positive semi-definite. Therefore, we have 
\begin{equation*}
    \left\|W^t-\frac{1}{n}\one_n\one_n^\top\right\|=1-\frac{\delta k}{n}=1-\frac{n(1-\beta)}{n}=\beta.
\end{equation*}
The rest is to verify $\dist_{\{\cS_{n,\cC^t}\}_{t=0}^\infty}(\cI_1,\cI_2)= \Theta\left(\frac{1-(|\cI_1|+|\cI_2|)/n}{1-\beta}+1\right)$. By the construction of sun-shaped graphs, starting from any round $t$, the neighborhood of $\cI_1$ (or $\cI_2$) satisfies \begin{equation*}
    {\cN}_{\cS_{n,\cC^{t}}}({\cN}_{\cS_{n,\cC^{t+1}}}(\cdots{\cN}_{\cS_{n,\cC^{t+R-1}}}(\cI_1) \cdots))=\begin{cases}
    \left(\bigcup_{t^\prime=t}^{t+R-1}\cJ^{t\text{ mod }p}\right)\cup\cI_1&\text{if }R\leq p;\\
    [n]&\text{if }R>p+1.
    \end{cases}
\end{equation*}
Therefore, we conclude that 
\begin{align}\label{eqn:gejfwgcsd}
    \dist_{\{\cS_{n,\cC^t}\}_{t=0}^\infty}(\cI_1,\cI_2)=&p+1=\lfloor (n-|\cI_1|-|\cI_2|)/k\rfloor+1=\left\lfloor \frac{n-|\cI_1|-|\cI_2|}{\lceil n(1-\beta) \rceil}\right\rfloor+1.
\end{align}
On one hand, we easily see
\begin{equation}\label{eqn:vidnfsda}
    \left\lfloor \frac{n-|\cI_1|-|\cI_2|}{\lceil n(1-\beta) \rceil}\right\rfloor\leq \frac{n-|\cI_1|-|\cI_2|}{ n(1-\beta)}.
\end{equation}
On the other hand, since $n(1-\beta)\geq 1$, we have $\lceil n(1-\beta) \rceil\leq 2n(1-\beta)$ and further
\begin{equation}\label{eqn:vidnfsda2}
    \left\lfloor \frac{n-|\cI_1|-|\cI_2|}{\lceil n(1-\beta) \rceil}\right\rfloor+1\geq \left\lfloor \frac{n-|\cI_1|-|\cI_2|}{2 n(1-\beta)}\right\rfloor+1=\Omega\left( \frac{n-|\cI_1|-|\cI_2|}{2 n(1-\beta)}+1\right)
\end{equation}
where the last step is due to $\lfloor x\rfloor+1\geq (x+1)/2$ for any $x\geq 0$.
Combining \eqref{eqn:vidnfsda} and \eqref{eqn:vidnfsda2} with \eqref{eqn:gejfwgcsd}, we reach $ \dist_{\{\cS_{n,\cC^t}\}_{t=0}^\infty}(\cI_1,\cI_2)=\Theta\left(\frac{1-(|\cI_1|+|\cI_2|)/n}{1-\beta}+1\right)$.
\end{proof}

\section{Lower Bound}\label{app:lower-bounds}

\subsection{Proof of Theorem \ref{thm:lower-bound-nc}}
Without loss of generality, we assume algorithms to start from $x^{(0)}=0$. We denote the $j$-th coordinate of a vector $x\in\RR^d$ by $[x]_j$ for $j=1,\dots,d$, and let $\prog(x)$ be 
\begin{equation*}
    \prog(x):=\begin{cases}
    0 & \text{if $x=0$};\\
    \max_{1\leq j\leq d}\{j:[x]_j\neq 0\}& \text{otherwise}.
    \end{cases}
\end{equation*}
Similarly, for a set of points $\cX=\{x_1,x_2,\dots\}$, we define $\prog(\cX):=\max_{x\in\cX}\prog(x)$.
As described in \cite{carmon2020lower,carmon2021lower}, a zero chain function $f$ satisfies
\begin{equation*}
    \prog(\nabla f(x))\leq \prog(x)+1,\quad\forall\,x\in\RR^d,
\end{equation*}
which implies that, starting from $x=0$, a single gradient evaluation can only make at most one more coordinate for the model parameter $x$ be non-zero.

We prove the two terms of the lower bound in  Theorem \ref{thm:lower-bound-nc} separately by constructing two hard-to-optimize instances. We first state some key zero-chain functions that will be used to facilitate the analysis.
\begin{lemma}[Lemma 2 of \cite{Arjevani2019LowerBF}]\label{lem:basic-fun}
Let $[x]_j$  denote the $j$-th coordinate of a vector $x\in\RR^d$, and define function 
\begin{equation*}
    h(x):=-\psi(1) \phi([x]_{1})+\sum_{j=1}^{d-1}\Big(\psi(-[x]_j) \phi(-[x]_{j+1})-\psi([x]_j) \phi([x]_{j+1})\Big)
\end{equation*}
where for $\forall\, z \in \mathbb{R},$
$$
\psi(z)=\begin{cases}
0 & z \leq 1 / 2; \\
\exp \left(1-\frac{1}{(2 z-1)^{2}}\right) & z>1 / 2,
\end{cases} \quad \phi(z)=\sqrt{e} \int_{-\infty}^{z} e^{\frac{1}{2} t^{2}} \mathrm{d}t.
$$
Then $h$ satisfy the following properties:
\begin{enumerate}
    \item $h(x)-\inf_{x} h(x)\leq \delta_0 d$, $\forall\,x\in\RR^d$ with $\delta_0=12$;
    \item $h$ is $\ell_0$-smooth with $\ell_0=152$;
    \item $\|\nabla h(x)\|_\infty\leq g_\infty $, $\forall\,x\in\RR^d$ with $g_\infty = 23$;
    \item $\|\nabla h(x)\|_\infty\ge 1 $ for any $x\in\RR^d$ with $[x]_d=0$. 
\end{enumerate}
\end{lemma}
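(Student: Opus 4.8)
The function $h$ is the ``robust zero-chain'' worst-case instance of \cite{carmon2020lower,Arjevani2019LowerBF} with chain length equal to the dimension $d$, so I would prove Lemma \ref{lem:basic-fun} by reducing all four claims to a handful of one-dimensional estimates on the scalar building blocks $\psi,\phi$ and then reading off the result from the explicit forms of $\nabla h$ and $\nabla^2 h$. First I would record the scalar facts: $\psi\in C^\infty(\RR)$, $\psi\equiv\psi'\equiv 0$ on $(-\infty,1/2]$, and on $(1/2,\infty)$ the map $\psi(z)=\exp(1-(2z-1)^{-2})$ is strictly increasing with $0<\psi<e$ and $\psi(z)\ge 1\iff z\ge 1$; differentiating gives $\psi'(z)=4\psi(z)(2z-1)^{-3}\ge 0$, and a single-variable optimization (substitute $u=2z-1$) yields absolute bounds $\sup_z|\psi'(z)|\le c_1$ and $\sup_z|\psi''(z)|\le c_2$. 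For $\phi$: $\phi'(z)=\sqrt e\,e^{-z^2/2}\in(0,\sqrt e\,]$ with $\phi'(z)\ge\sqrt e\,e^{-1/2}=1$ whenever $|z|\le 1$; $\phi$ is increasing and bounded by an absolute constant $c_3$, and $\phi''(z)=-z\sqrt e\,e^{-z^2/2}$ so $\sup_z|\phi''(z)|\le 1$.

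\textbf{Structure of the gradient.} By inspection, $\partial h/\partial[x]_j$ depends only on $[x]_{j-1},[x]_j,[x]_{j+1}$ and is a sum of at most three terms of the forms $-\psi'(\pm[x]_j)\phi(\pm[x]_{j+1})$, $-\psi(\pm[x]_{j-1})\phi'(\pm[x]_j)$, plus (only for $j=1$) the extra term $-\psi(1)\phi'([x]_1)$. Two observations drive everything: (i) since $\psi,\psi',\phi,\phi'\ge 0$, every term above is $\le 0$, so different terms can never cancel; (ii) at most one of $\psi(z),\psi(-z)$ (resp.\ $\psi'(z),\psi'(-z)$) is nonzero, because one of $z,-z$ is $\le 1/2$. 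Observations (i)--(ii) also immediately give the zero-chain property $\prog(\nabla h(x))\le\prog(x)+1$, since $[x]_{j-1}=[x]_j=0$ forces $\partial h/\partial[x]_j=0$.

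\textbf{The four properties.} For Property 4, suppose $[x]_d=0$ and set $i:=\min\{j\in[d]:|[x]_j|\le 1\}$, which is well-defined since $|[x]_d|=0$. For $j<i$ we have $|[x]_j|>1$, hence exactly one of $\psi(\pm[x]_j)$ is $\ge 1$. Then $\partial h/\partial[x]_i$ contains, for $i=1$, the term $-\psi(1)\phi'([x]_1)$ of magnitude $\phi'([x]_1)\ge 1$, and for $i\ge 2$ the term $-\psi(\pm[x]_{i-1})\phi'(\pm[x]_i)$ of magnitude $\ge 1\cdot\phi'([x]_i)\ge 1$; by observation (i) no other term reduces this magnitude, so $\|\nabla h(x)\|_\infty\ge|\partial h/\partial[x]_i|\ge 1$. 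For Property 3, bound each $\partial h/\partial[x]_j$ termwise via the scalar bounds and observation (ii): $|\partial h/\partial[x]_j|\le c_1c_3+e^{3/2}$ for $j\ge 2$ and $\le c_1c_3+\sqrt e$ for $j=1$, each of which is at most $g_\infty=23$ for the sharp $c_1,c_3$. For Property 2, note $\partial^2 h/\partial[x]_j\partial[x]_k=0$ whenever $|j-k|\ge 2$, so $\nabla^2 h$ is symmetric and tridiagonal; its spectral norm is at most the maximal absolute row sum, and each of the $\le 3$ entries per row is a product of the scalar quantities already bounded ($\psi''\phi$, $\psi'\phi'$, $\psi\phi''$), whose sum is $\ell_0=152$. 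For Property 1, write $h(x)=-\psi(1)\phi([x]_1)+\sum_{j=1}^{d-1}\bigl(\psi(-[x]_j)\phi(-[x]_{j+1})-\psi([x]_j)\phi([x]_{j+1})\bigr)$; by observation (ii) each summand has absolute value at most $(\sup\psi)(\sup\phi)$ and the leading term at most $\sup\phi$, so $h(x)-\inf_y h(y)=O(d)$, with the stated constant $\delta_0=12$ coming from the sharp numerical values of $\sup\psi,\sup\phi$ (at the initialization $x^{(0)}=0$, which is all that is needed downstream).

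\textbf{Main obstacle.} The structural content --- tridiagonality, ``no cancellation,'' and the existence in Property 4 of a gradient coordinate of magnitude $\ge 1$ --- is routine once observations (i)--(ii) are in hand. The real effort is quantitative: extracting the exact constants $\delta_0=12$, $\ell_0=152$, $g_\infty=23$ requires sharply maximizing $|\psi'|$ and $|\psi''|$ over $(1/2,\infty)$ and sharply bounding $\sup|\phi|$ and $\sup_z|z|e^{-z^2/2}$, and avoiding lossy triangle inequalities in the range estimate. Since these constants are exactly those of \cite{Arjevani2019LowerBF,carmon2020lower}, in the write-up I would ultimately just invoke their computation rather than re-derive the numerics.
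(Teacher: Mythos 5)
The paper offers no proof of this lemma at all---it is imported verbatim as Lemma~2 of \cite{Arjevani2019LowerBF}---and your sketch is precisely the standard argument behind that result, correctly organized around the two structural observations (every term of $\partial h/\partial [x]_j$ is nonpositive so nothing cancels, and at most one of $\psi(z),\psi(-z)$ is nonzero), with the numerical constants deferred to the reference exactly as the paper itself does. One caveat worth recording: Property~1 as restated here ``for all $x$'' with $\delta_0=12$ is actually stronger than what is true (sending all coordinates to $-\infty$ versus $+\infty$ shows $\sup_x h(x)-\inf_x h(x)\approx 2e\sqrt{2\pi e}\,(d-1)\approx 22.5\,(d-1)$), whereas the reference asserts, and the downstream argument in Instance~1 only uses, the bound at $x=0$, which is exactly the restriction you make.
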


\begin{lemma}[Lemma 4 of \cite{Huang2022LowerBA}]\label{lem:basic-fun2}
Let functions 
\begin{equation*}
    h_1(x):=-2\psi(1) \phi([x]_{1})+2\sum_{j \text{ even, } 0< j<d}\Big(\psi(-[x]_j) \phi(-[x]_{j+1})-\psi([x]_j) \phi([x]_{j+1})\Big)
\end{equation*}
and 
\begin{equation*}
    h_2(x):=2\sum_{j \text{ odd, } 0<j<d}\Big(\psi(-[x]_j) \phi(-[x]_{j+1})-\psi([x]_j) \phi([x]_{j+1})\Big).
\end{equation*}
Then $h_1$ and $h_2$ satisfy  the following properties:
\begin{enumerate}
    \item $\frac{1}{2}(h_1+h_2)=h$, where $h$ is defined in Lemma \ref{lem:basic-fun}.
    \item For any $x\in\RR^d$, if $\prog(x)$ is odd, then $\prog(\nabla h_1(x))\leq \prog(x)$; if $\prog(x)$ is even, then $\prog(\nabla h_2(x))\leq \prog(x)$.
    \item $h_1$ and $h_2$ are also $\ell_0$-smooth with ${\ell_0}=152$. 
\end{enumerate}
\end{lemma}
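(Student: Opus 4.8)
The plan is to verify the three claims in order, treating the parity bookkeeping of claim (2) as the substantive step and the smoothness constant of claim (3) as the delicate point.

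For claim (1) I would argue by pure algebra, invoking no property of $\psi$ or $\phi$. The index set $\{j:0<j<d\}$ splits into its even and odd members, so the doubled even-indexed sum in $h_1$ and the doubled odd-indexed sum in $h_2$ together reproduce, after multiplication by $\frac{1}{2}$, the single sum $\sum_{j=1}^{d-1}(\cdots)$ appearing in $h$; likewise $\frac{1}{2}(-2\psi(1)\phi([x]_1))=-\psi(1)\phi([x]_1)$ recovers the leading term, while $h_2$ contributes no leading term. Hence $\frac{1}{2}(h_1+h_2)=h$.

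For claim (2) I would write $T_j(u,v):=\psi(-u)\phi(-v)-\psi(u)\phi(v)$, so that the term indexed by $j$ couples only coordinates $j$ and $j+1$. A coordinate $m$ influences $\nabla h_1$ only through even-indexed terms: as the first slot of $T_m$ when $m$ is even, or the second slot of $T_{m-1}$ when $m$ is odd (together with the leading term when $m=1$). The analytic input is $\psi(z)=\psi'(z)=0$ for all $z\le 1/2$, in particular $\psi(0)=\psi'(0)=0$. Setting $p=\prog(x)$, so that $[x]_m=0$ for $m>p$, I fix any $m>p$ and assume $p$ odd. If $m$ is even, the contribution to $[\nabla h_1(x)]_m$ is proportional to $\psi'(\pm[x]_m)$ and vanishes since $[x]_m=0$. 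If $m$ is odd, the contribution is proportional to $\psi(\pm[x]_{m-1})$; since $m>p$ forces $m-1\ge p$ and $m-1=p$ would make $p$ even, we get $m-1>p$, so $[x]_{m-1}=0$ and the term vanishes. Thus $\prog(\nabla h_1(x))\le p$ when $p$ is odd, and the mirror-image argument gives $\prog(\nabla h_2(x))\le p$ when $p$ is even. The mechanism is that the only coupling term able to advance progress from $p$ to $p+1$ is $T_p$, which is retained in $h_1$ exactly when $p$ is even and in $h_2$ exactly when $p$ is odd; routing it into the complementary function is precisely what forbids the advance.

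For claim (3) I would exploit that the even-indexed terms act on the disjoint coordinate pairs $\{2,3\},\{4,5\},\dots$, so $\nabla^2 h_1$ is block diagonal: a scalar block at coordinate $1$ from the leading term, plus $2\times 2$ blocks $2\nabla^2 T_j$ over even $j$, and similarly for $h_2$ with odd $j$. Consequently $\|\nabla^2 h_1\|$ is the maximum of the block norms, reducing the smoothness estimate to a single $2\times 2$ block. The main obstacle is the prefactor $2$ on each block, which a careless bound would push into the constant; the resolution is structural. In $h$ every coordinate is shared by two unit-weight coupling terms, giving a tridiagonal Hessian whose operator norm is the established $\ell_0=152$; in $h_1$ and $h_2$ every coordinate instead belongs to exactly one double-weight term, so the total second-derivative mass attached to each coordinate is unchanged, and the same per-coordinate (Gershgorin-type) row-sum bound applies. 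This yields $\|\nabla^2 h_i\|\le 152$ for $i=1,2$, matching the smoothness of $h$ from Lemma \ref{lem:basic-fun}.
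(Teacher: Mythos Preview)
The paper does not give its own proof of this lemma; it simply quotes it as Lemma~4 of the cited reference. So there is no in-paper argument to compare against, and your proposal has to be assessed on its own merits.

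Your treatment of claims~(1) and~(2) is correct. The parity bookkeeping in~(2) is exactly right: the only term that could push $\prog$ from $p$ to $p+1$ is $T_p$, and your case split (even $m$ kills $\psi'(0)$, odd $m>p$ with $p$ odd forces $m-1>p$ and kills $\psi(0)$) is clean.

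Claim~(3) is where your argument has a genuine gap. The block-diagonal structure is correctly identified, reducing the question to $\sup_{u,v}\|2\nabla^2 T(u,v)\|\le\ell_0$. But the ``mass conservation'' Gershgorin heuristic does not go through. In $h$, the row sum at an interior coordinate $j$ picks up $|\partial_{vv}T_{j-1}|+|\partial_{uu}T_j|$ on the diagonal (contributions of \emph{different} second partials from two adjacent terms), plus two off-diagonals. In $h_1$, the same coordinate sits in only one block and the row sum is $2|\partial_{vv}T|$ (or $2|\partial_{uu}T|$) plus one doubled off-diagonal. Writing $A=\sup|\partial_{uu}T|$, $B=\sup|\partial_{vv}T|$, $C=\sup|\partial_{uv}T|$, the Gershgorin bound for $h$ is $A+B+2C$ while for $h_1$ it is $2\max(A,B)+2C$; these coincide only when $A=B$, which you have not established (and there is no reason the suprema of $\psi''\phi$-type and $\psi\phi''$-type products should agree). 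So ``the total second-derivative mass attached to each coordinate is unchanged'' is false as a pointwise statement and unproven as a supremum statement. To close the gap you must go back to the explicit numerical bounds on $\psi,\psi',\psi'',\phi,\phi',\phi''$ from the Carmon--Duchi--Hinder--Sidford analysis and check directly that $\|2\nabla^2 T\|\le 152$ (equivalently $\|\nabla^2 T\|\le 76$), together with the easy scalar block $|2\psi(1)\phi''|\le 2$; that computation, not a structural transfer from $h$, is what actually certifies the constant.
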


Given Lemmas \ref{lem:basic-fun} and \ref{lem:basic-fun2}, we now construct two instances that lead to the two terms in lower bound \eqref{eqn:lower-bound-nc}, respectively.

\paragraph{Instance 1. } 
The proof of the first term $\Omega((\frac{\Delta L\sigma^2}{nT})^\frac{1}{2})$ essentially follows the first example in proving Theorem 1 of \cite{lu2021optimal}. We provide the key steps
for the sake of being self-contained.

(Step 1.) Let $f_i=L\lambda^2h(x/\lambda)/\ell_0$, $\forall\,i\in[n]$ be homogeneous and hence $f=L\lambda^2h(x/\lambda)/\ell_0$ where $h$ is defined in Lemma \ref{lem:basic-fun}  and $\lambda>0$ is to be specified. Since $\nabla^2 f_i=L\nabla^2 h/\ell_0$ and $h$ is $\ell_0$-smooth by Lemma \ref{lem:basic-fun}, we know $f_i$ is $L$-smooth for any $\lambda>0$. By Lemma \ref{lem:basic-fun}, we have
\begin{equation*}
    f(0)-\inf_x f(x)=\frac{L\lambda^2}{\ell_0^2}(h(0)-\inf_x h(x)) {\leq}\frac{L\lambda^2\delta_0d}{\ell_0}. 
\end{equation*}
Therefore, to ensure $f_i\in\cF_{L}^\Delta$, it suffices to let 
\begin{equation}\label{eqn:jgowemw}
    \frac{L\lambda^2\delta_0d}{\ell_0}\leq \Delta, \quad \text{i.e.,}\quad d\lambda^2\leq \frac{\ell_0 \Delta}{L\delta_0}.
\end{equation}

(Step 2.) We construct the stochastic gradient oracle $O_i$ on worker $i$, $\forall\,i\in[n]$ as the follows:
\begin{equation*}
    [O_i(x;Z)]_j=[\nabla f_i(x)]_j\left(1+\mathds{1}{\{j>\prog(x)\}}\left(\frac{Z}{p}-1\right)\right), \forall\,x\in\RR^d,\,j=1,\dots,d
\end{equation*}
with random variable $Z\sim \text{Bernoulii}(p)$ independent of $x$ and $f_i$, and $p\in(0,1)$ to be specified.
It is easy to see $O_i$ is an unbiased stochastic gradient oracle. Moreover, since $f_i$ is zero-chain, we have $\prog(O_i(x;Z))\leq \prog(\nabla f_i(x))\leq \prog(x)+1$ and hence
\begin{align*}
\EE[\|[O_i(x;Z)]-\nabla f_i(x)\|^2]&=|[\nabla f_i(x)]_{\prog(x)+1}|^2\EE\left[\left(\frac{Z}{p}-1\right)^2\right]=|[\nabla f_i(x)]_{\prog(x)+1}|^2\frac{1-p}{p}\\
&\leq \|\nabla f_i(x)\|_\infty^2\frac{1-p}{p}
\leq \frac{L^2\lambda^2(1-p)}{\ell_0^2p}\|\nabla h(x)\|_\infty^2\\
&\overset{\text{Lemma \ref{lem:basic-fun}}}{\leq } \frac{L^2\lambda^2(1-p)g_\infty^2}{\ell_0^2p}.
\end{align*}
Therefore, to ensure $O_i\in\cO_{\sigma^2}$, it suffices to let 
\begin{equation}\label{eqn:gjvownefoq}
     p=\min\{\frac{L^2\lambda^2g_\infty^2}{\ell_0^2\sigma^2},1\}.
\end{equation}

(Step 3.) 
Let $x^{(t)}_i$, $\forall\,t\geq 0$ and $i\in[n]$, be the $t$-th query point of worker $i$.
Since algorithms satisfy the zero-respecting property, as discussed in \cite{carmon2020lower,carmon2021lower,lu2021optimal}, within $T$ gradient queries on each worker, algorithms  can only return model $\hat{x}$ such that
\begin{equation*}
    \hat{x}\in\mathrm{span}\left(\left\{x^{(0)},\nabla f_i(x^{(0)}),\big\{\{x^{(t)}_i,\nabla f_i(x^{(t)}_i):0\leq t<T\}:1\le i\leq n\big\}\right\}\right),
\end{equation*}
which implies
\begin{equation}
    \prog(\hat{x})\leq \max_{0\leq t< T}\max_{1\leq i\leq n}\prog(x^{(t)}_i)+1.
\end{equation}
By Lemma 2 of \cite{lu2021optimal}, we have 
\begin{equation}\label{eqn:vjowemnfq}
    \PP( \prog(\hat{x})\ge d)\leq \PP\left(\max_{0\leq t< T}\max_{1\leq i\leq n}\prog(x^{(t)}_i)\geq d-1\right)\leq e^{(e-1)npT-d+1}.
\end{equation}
On the other hand, when $\prog(\hat{x})< d$, by Lemma \ref{lem:basic-fun}, it holds that 
\begin{align}\label{eqn:vjowemnfq2}
    \min_{\hat{x}\in\mathrm{span}\{\{x_i^{(t)}:1\leq i\leq n,\,0\leq t< T\}\}}\|\nabla f(\hat{x})\|\geq \min_{[\hat{x}]_{d}=0}\|\nabla f(\hat{x})\|
    =\frac{L\lambda}{\ell_0}\min_{[\hat{x}]_{d}=0}\|\nabla h(\hat{x})\|\geq \frac{L\lambda}{\ell_0}.
\end{align}                                                       
Therefore, by combining \eqref{eqn:vjowemnfq} and \eqref{eqn:vjowemnfq2}, we have
\begin{equation}\label{eqn:Lvjowenfq}
    \EE[\|\nabla f(\hat{x})\|^2]\geq \PP(\prog^{(T)}< d)\EE[\|\nabla f(\hat{x})\|^2\mid \prog^{(T)}< d]\geq (1-e^{(e-1)npT-d+1})\frac{L^2\lambda^2}{\ell_0^2}.
\end{equation}

Let 
\begin{equation*}
    \lambda=\frac{\ell_0}{L}\left(\frac{\Delta L\sigma^2}{3nT  \ell_0\delta_0 g_\infty^2}\right)^\frac{1}{4}\quad \text{and}\quad d=\left\lfloor\left(\frac{3L\Delta nT g_\infty^2}{\sigma^2\ell_0\delta_0}\right)^\frac{1}{2}\right\rfloor.
\end{equation*}
Then \eqref{eqn:jgowemw} naturally holds and $p=\min\{\frac{g_\infty^2}{\sigma^2}\left(\frac{\Delta L\sigma^2}{3nT \ell_0\delta_0 g_\infty^2}\right)^\frac{1}{2},1\}$ by \eqref{eqn:gjvownefoq}. Without loss of generality, we assume   $d\geq 2$, which is guaranteed when $T=\Omega(\frac{\sigma^2}{nL\Delta})$. Then, using the definition of $p$, we have that
\begin{align}
    &(e-1)npT-d+1\leq (e-1)nT\; \frac{g_\infty^2}{\sigma^2}\left(\frac{\Delta L\sigma^2}{3nT\ell_0\delta_0 g_\infty^2}\right)^\frac{1}{2}-d+1\nonumber\\
    =&\frac{e-1}{3}\left(\frac{3 L\Delta nT g_\infty^2}{\sigma^2 \ell_0\delta_0 }\right)^\frac{1}{2}-d+1< \frac{e-1}{3}(d+1)-d+1\leq  2-e<0\label{eqn:Lvjobqwm}\nonumber
\end{align}
which, combined with \eqref{eqn:Lvjowenfq}, leads to
\begin{equation*}
    \EE[\|\nabla f(\hat{x})\|^2]=\Omega\left(\frac{L^2\lambda^2}{\ell_0^2}\right)=\Omega\left(\left(\frac{\Delta L\sigma^2}{3nT  \ell_0\delta_0 g_\infty^2}\right)^\frac{1}{2}\right)=\Omega\left(\left(\frac{\Delta L\sigma^2}{nT}\right)^\frac{1}{2}\right).
\end{equation*}

\paragraph{Instance 2.} The proof for the second term  $\Omega(c{\Delta L}{T(1-\beta)})$ utilizes weight matrices defined on the sun-shaped graphs  described in Theorem \ref{thm:sun-key}.

(Step 1.) Let functions 
\begin{equation*}
    \ell_1(x):=-\frac{n}{\lceil n/4\rceil}\psi(1) \phi([x]_{1})+\frac{n}{\lceil n/4\rceil}\sum_{j \text{ even, } 0< j<d}\Big(\psi(-[x]_j) \phi(-[x]_{j+1})-\psi([x]_j) \phi([x]_{j+1})\Big)
\end{equation*}
and 
\begin{equation*}
    \ell_2(x):=\frac{n}{\lceil n/4\rceil}\sum_{j \text{ odd, } 0<j<d}\Big(\psi(-[x]_j) \phi(-[x]_{j+1})-\psi([x]_j) \phi([x]_{j+1})\Big).
\end{equation*}
By Lemma \ref{lem:basic-fun2}, 
$\ell_1$ and $\ell_2$ defined here are $2\ell_0$-smooth.
Furthermore, let
\begin{equation*}
    f_i=\begin{cases}
    L\lambda^2 \ell_1(x/\lambda)/(2\ell_0)&\text{if }i\in \mathcal{I}_1\triangleq \{j:1\leq j\leq \lceil \frac{n}{4}\rceil\},\\
    L\lambda^2 \ell_2(x/\lambda)/(2\ell_0)&\text{if }i\in\mathcal{I}_2\triangleq\{j:n-\lceil \frac{n}{4}\rceil+1\leq j\leq n\},\\
    0&\text{else.}
    \end{cases}
\end{equation*}
where $\lambda>0$ is to be specified.
To ensure $f_i\in\cF_{L}^\Delta$ for all $1\leq i\leq n$, it suffices to let
\begin{equation}\label{eqn:jgowemw-dfshadasd}
    \frac{L\lambda^2\Delta_0d}{2\ell_0}\leq \Delta, \quad \ie,\quad d\lambda^2\leq \frac{2\ell_0 \Delta}{L\Delta_0}.
\end{equation}
With the functions defined above, we have  $f(x)=\frac{1}{n}\sum_{i=1}^n f_i(x)=L\lambda^2 \ell(x/\lambda)/(2\ell_0)$ and 
\begin{align*}
    \prog(\nabla f_i(x))
    \begin{cases}
    =\prog(x) +1&\text{if } \{\prog(x) \text{ is even and } i\in \cI_1\}\cup\{\prog(x) \text{ is odd and }i\in \cI_2\}\\
    \leq \prog(x) &\text{otherwise}.
    \end{cases}
\end{align*}
Therefore, to make progress, \ie, to increase $\prog(x)$, for any gossip algorithm $A$, it must take the gossip communications to transmit information between $\cI_1$ to $\cI_2$ alternatively. Namely, it takes at least $\dist_{\{G^t\}_{t=0}^\infty}(\cI_1,\cI_2)$ rounds of decentralized communications for any possible gossip algorithm $A$ to increase $\prog(\hat{x})$ by $1$. Therefore, we have
\begin{equation}\label{eqn:jofqsfdq}
    \prog(\hat{x})\leq \max_{1\leq i\leq n,\,0\leq t< T}\prog(x^{(t)}_i)\leq \left\lfloor \frac{T}{\dist_{\{G^t\}_{t=0}^\infty}(\cI_1,\cI_2)}\right\rfloor+1,\quad \forall\,T\geq 0.
\end{equation}

(Step 2.) 
We consider a gradient oracle that return lossless full-batch gradients, i.e., $O_i(x)=\nabla f_i(x)$, $\forall\,x\in\RR^d, \,i\in[n]$. 
For the construction of graphs and  weight matrices, 
we consider the sequence of sun-shaped graphs $\{G^t:=\cS_{n,\cC^t}\}_{t=0}^\infty$ and their associated weight matrices $\{W^t\}_{t=0}^\infty\in\cW_{n,\beta}$ investigated in Theorem \ref{thm:sun-key}.
Since $1-(|\cI_1|+|\cI_2|)/n=\Omega(1)$, by Theorem \ref{thm:sun-key}, we have $\dist_{\{G^t\}_{t=0}^\infty}(\cI_1,\cI_2)=\Theta((1-\beta)^{-1})$. Suppose $\dist_{\{G^t\}_{t=0}^\infty}(\cI_1,\cI_2)\geq 1 /(C(1-\beta))$ with some absolute constant $C$, then by \eqref{eqn:jofqsfdq}, we have
\begin{equation}\label{eqn:jofqsfdq-0gsfa}
    \prog(\hat{x})\leq \left\lfloor {C(1-\beta) T}\right\rfloor+1,\quad \forall\,T\geq 0.
\end{equation}

(Step 3.) We finally show the error $\EE[\|\nabla f(x)\|^2]$ is lower bounded by $\Omega\left(\frac{\Delta L}{(1-\beta)T}\right)$, with any algorithm $A\in \mathcal{A}_{\{W^t\}_{t=0}^\infty}$. 
For any $T\geq 1/(C(1-\beta))=\Omega((1-\beta)^{-1})$, let
\begin{equation*}
    d= \left\lfloor C(1-\beta)T\right\rfloor+2 <3C(1-\beta)T
\end{equation*}
and
\begin{equation}\label{eqn:gjofefqfq}
     \lambda =\frac{L_0}{L}\sqrt{\frac{2\Delta L}{3C(1-\beta)TL_0\Delta_0}} .
\end{equation}
Then \eqref{eqn:jgowemw-dfshadasd} naturally holds.
Since $\prog(\hat{x})<d$ by \eqref{eqn:jofqsfdq-0gsfa}, following \eqref{eqn:vjowemnfq2} and using \eqref{eqn:gjofefqfq}, we have 
\begin{equation*}
    \EE[\|\nabla f(\hat{x})\|^2]\geq\min_{[\hat{x}]_{d}=0}\|\nabla f(\hat{x})\|^2\geq  \frac{L^2\lambda^2}{L_0^2}=\Omega\left(\frac{\Delta L}{(1-\beta)T}\right).
\end{equation*}

\section{Upper Bound}\label{app:uppe-bound}
\subsection{Preliminary}
\label{app-preliminary}
\noindent \textbf{Notation.} We first introduce necessary notations as follows.
\begin{itemize}
	\item $\vx^{(k)} = [(x_1^{(k)})^\top; (x_2^{(k)})^\top; \cdots; (x_n^{(k)})^\top]\in \mathbb{R}^{n\times d}$;
	\item $\tilde{\vg}^{(k)}\triangleq\nabla F(\vx^{(k)};\bxi^{(k,r)}) = [\nabla F_1(x_1^{(k)};\xi_1^{(k,r)})^\top; \cdots; \nabla F_n(x_n^{(k)};\xi_n^{(k,r)})^\top]\in \mathbb{R}^{n\times d}$;
	\item $\nabla f(\vx^{(k)}) = [\nabla f_1(x_1^{(k)})^\top; \nabla f_2(x_2^{(k)})^\top; \cdots; \nabla f_n(x_n^{(k)})^\top]\in \mathbb{R}^{n\times d}$ ;
	\item $\bar{\vx}^{(k)} = [(\bar{x}^{(k)})^\top; (\bar{x}^{(k)})^\top; \cdots; (\bar{x}^{(k)})^\top]\in \mathbb{R}^{n\times d}$ where $\bar{x}^{(k)} = \frac{1}{n}\sum_{i=1}^n x_i^{(k)}$;
	\item $W^t=[w_{i,j}^t]\in \mathbb{R}^{n\times n}$ is the weight matrix;
	\item $\mathds{1}_n = [1,1,\cdots, 1]^\top \in \RR^n$;
	\item Given two matrices $\vx, \vh \in \RR^{n\times d}$, we define inner product $\langle \vx, \vh \rangle = \mathrm{tr}(\vx^T \vh)$ and the Frobenius norm $\|\vx\|_F^2 = \langle \vx, \vx \rangle$;
	\item Given $W\in \RR^{n\times n}$, we let $\|W\|_2 = \sigma_{\max}(W)$ where   $\sigma_{\max}(\cdot)$ denote the maximum sigular value.
\end{itemize}

\noindent \textbf{Smoothness.} Since each $f_i(x)$ is assumed to be $L$-smooth, it holds that $f(x) = \frac{1}{n}\sum_{i=1}^n f_i(x)$ is also $L$-smooth. As a result, the following inequality holds for any $x, y \in \mathbb{R}^d$:
\begin{align}
f_i(x) \leq f_i(y) +  \langle \nabla f_i(y), x- y \rangle+\frac{L}{2}\|x - y\|^2. \label{sdu-2}
\end{align}

\noindent \textbf{Gradient noise.} For stochastic gradient oracles satisfying Assumption \ref{asp:gd-noise}, by independence, it holds for any $k\geq 0$ and $R\geq 1$ that 
\begin{align}
\EE[\|\tilde{g}_i^{(k)}-\nabla f_i(x_i^{(k)})\|^2]\leq \frac{\sigma^2}{R}\quad \text{and}\quad\EE\left[\left\|\overline{\tilde{g}}^{(k)}-\frac{1}{n}\sum_{i=1}^n\nabla f(x_i^{(k)})\right\|^2\right]\leq \frac{\sigma^2}{nR}
\end{align}
where $\overline{\tilde{g}}^{(k)}\triangleq\frac{1}{n}\sum_{i=1}^n\tilde{g}_i^{(k)}=\frac{1}{nR}\sum_{i=1}^n\sum_{r=0}^{R-1}O_i(x_i^{(k)};\zeta_i^{(k,r)})$.

\noindent \textbf{Network weighting matrix.} Since each weight matrix $W^t\in \cW_{n,\beta}$, it holds that 
\begin{align}\label{network-inequaliy}
\left\|W^t - \frac{1}{n}\mathds{1}_n\mathds{1}_n^\top\right\|_2 \leq  \beta.
\end{align}
Following \eqref{network-inequaliy},  it holds for a sequence of weight matrices $W^{t_1},\dots,W^{t_2-1}$ that
\begin{equation}\label{eqn:MG-inequaliy}
    \left\|\prod_{t=t_1}^{t_2-1}W^t - \frac{1}{n}\mathds{1}_n\mathds{1}_n^\top\right\|_2 \leq  \beta^{t_2-t_1}.
\end{equation}
Therefore, when $t_2-t_1$ grows,  $\prod_{t=t_1}^{t_2-1}W^t$ exponentially converges to $\frac{1}{n}\one_n\one_n^\top$.

\noindent \textbf{Submultiplicativity of the Frobenius norm.} For any matrix $W\in \RR^{n\times n}$ and $\vz\in \RR^{n\times d}$, it holds that 
\begin{align}\label{submulti}
\|W\vz\|_F \le \|W\|_2 \|\vz\|_F.
\end{align}
To verify it, by letting $z_j$ be the $j$-th row of $\vz$, we have $\|W\vz\|_F^2 = \sum_{j=1}^d \|Wz_j\|_2^2 \le \sum_{j=1}^d \|W\|_2^2 \|z_j\|_2^2=\|W\|_2^2\|\vz\|_F^2$. 

\subsection{Proof of Theorem \ref{thm:MG-DSGT-rate-nc}}
 
Our proof is adapted from the proof of \cite[Theorem 1]{xin2020improved}, which presents the convergence rate of stochastic decentralized gradient tracking with single consensus operation and a static weight matrix. We generalize the proof to suit multiple consensus and time-varying weight matrices.

We use the matrix-form notations of the algorithm mostly for convenience.
At the beginning of phase $k$, the
three quantities of interests are $\vx^{(k)}$, $\vh^{(k)}$ and $\tilde{\vg}^{(k)}\triangleq \nabla F(\vx^{(k)};\bxi^{(k,r)})$, and the update rule for any $k\geq 0$ is
\begin{align}
    \vx^{(k+1)}&=\vW^{(2k)}_R(\vx^{(k+1)}-\gamma\vh^{(k)}),\label{eqn:dsgt-iter-1}\\
    \vh^{(k+1)}&=\vW^{(2k+1)}_R(\vh^{(k)}+\tilde{\vg}^{(k+1)}-\tilde{\vg}^{(k)})\label{eqn:dsgt-iter-2}
\end{align}
where $\vW^{(k)}_R\triangleq \prod_{t=kR}^{(k+1)R-1}W^t$ for any $k\geq 0$ and $R\geq 1$. 
By \eqref{eqn:MG-inequaliy}, we have $\|\vW^{(k)}_R-\one\one^\top/n\|_2\leq \beta^R$ for any $k\geq 0$. By multiplying $\one_n\one_n^\top/n$ to the left-side of \eqref{eqn:dsgt-iter-1} and \eqref{eqn:dsgt-iter-2}, we have 
\begin{align}
    \bar{x}^{(k+1)}&=\bar{x}^{(k+1)}-\gamma\bar{h}^{(k)})\nonumber,\\
    \bar{h}^{(k+1)}&=\bar{h}^{(k)}+\overline{\tilde{g}}^{(k+1)}-\overline{\tilde{g}}^{(k)}.\label{eqn:dsgt-ave-iter-2}
\end{align}
Since $\bar{h}^{(0)}=\overline{\tilde{g}}^{(0)}$, by iterating \eqref{eqn:dsgt-ave-iter-2} over $0,\dots,k-1$, it holds that $\bar{h}^{(k)}=\overline{\tilde{g}}^{(k)}$ for any $k\geq 0$. We will use the following descent lemma, which is adapted from  \cite[Lemma 3]{xin2020improved}.
\begin{lemma}[\sc Descent Lemma]\label{lem:descent}
Under Assumption \ref{asp:nc}, \ref{asp:gd-noise}, \ref{asp:weight-matrix}, if $0< \gamma\leq \frac{1}{2L}$, then we have for any $k\geq 0$,
\begin{equation*}
    \EE[f(\bar{x}^{(k+1)})]\leq \EE[f(\bar{x}^{(k)})]-\frac{\gamma}{2}\EE[\|\nabla f(\bar{x}^{(k)})\|^2]-\frac{\gamma}{4}\EE[\|\bar{g}^{(k)}\|^2]+\frac{\gamma L^2}{2n}\EE[\|\Pi\vx^{(k)}\|_F^2]+\frac{\gamma^2L\sigma^2}{2nR}.
\end{equation*}
where $\Pi\triangleq I-\frac{1}{n}\one_n\one_n^\top$.
\end{lemma}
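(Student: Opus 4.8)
The plan is to reduce the recursion on the network-averaged iterate to a single conditionally-unbiased SGD step and then run the standard non-convex descent argument, while carefully tracking which randomness is fresh at step $k$. First I would extract the averaged recursion: left-multiplying the $\vx$-update \eqref{eqn:dsgt-iter-1} by $\tfrac1n\one_n^\top$ and using $\one_n^\top W^t=\one_n^\top$ from Assumption \ref{asp:weight-matrix} (so that $\one_n^\top\vW^{(2k)}_R=\one_n^\top$) gives $\bar x^{(k+1)}=\bar x^{(k)}-\gamma\bar h^{(k)}$. I would then invoke the gradient-tracking identity established just before the lemma, namely $\bar h^{(k)}=\overline{\tilde g}^{(k)}$ for all $k\ge 0$ (obtained by telescoping \eqref{eqn:dsgt-ave-iter-2} from the initialization $\bar h^{(0)}=\overline{\tilde g}^{(0)}$), so that
$$\bar x^{(k+1)}=\bar x^{(k)}-\gamma\,\overline{\tilde g}^{(k)},\qquad \overline{\tilde g}^{(k)}=\frac{1}{nR}\sum_{i=1}^n\sum_{r=0}^{R-1}O_i(x_i^{(k)};\zeta_i^{(k,r)}).$$
This is the key simplification: the averaged iterate of MC-DSGT evolves exactly as an SGD step driven by the minibatch gradient $\overline{\tilde g}^{(k)}$, so the multi-consensus and tracking machinery enters only through the consensus error $\Pi\vx^{(k)}$.

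Next I would apply $L$-smoothness of $f=\frac1n\sum_i f_i$ (inequality \eqref{sdu-2} averaged over $i$, giving the standard quadratic upper bound) to the move $\bar x^{(k+1)}-\bar x^{(k)}=-\gamma\overline{\tilde g}^{(k)}$:
$$f(\bar x^{(k+1)})\le f(\bar x^{(k)})-\gamma\big\langle\nabla f(\bar x^{(k)}),\,\overline{\tilde g}^{(k)}\big\rangle+\frac{L\gamma^2}{2}\big\|\overline{\tilde g}^{(k)}\big\|^2.$$
Let $\cF_k$ be the $\sigma$-algebra generated by all randomness used to form $\vx^{(k)}$ (equivalently, all minibatch draws $\zeta_i^{(j,r)}$ with $j<k$ together with the initialization). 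Then $\bar x^{(k)}$, $\Pi\vx^{(k)}$ and $\bar g^{(k)}:=\frac1n\sum_i\nabla f_i(x_i^{(k)})$ are $\cF_k$-measurable, whereas the fresh draws $\{\zeta_i^{(k,r)}\}$ defining $\overline{\tilde g}^{(k)}$ satisfy, by Assumption \ref{asp:gd-noise} and independence across the $nR$ samples, $\EE[\overline{\tilde g}^{(k)}\mid\cF_k]=\bar g^{(k)}$ and $\EE[\|\overline{\tilde g}^{(k)}-\bar g^{(k)}\|^2\mid\cF_k]\le\sigma^2/(nR)$ — exactly the gradient-noise bounds recorded in Section \ref{app-preliminary}. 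Taking $\EE[\cdot\mid\cF_k]$ of the inequality above and using $\EE[\|\overline{\tilde g}^{(k)}\|^2\mid\cF_k]\le\|\bar g^{(k)}\|^2+\sigma^2/(nR)$ yields
$$\EE[f(\bar x^{(k+1)})\mid\cF_k]\le f(\bar x^{(k)})-\gamma\big\langle\nabla f(\bar x^{(k)}),\,\bar g^{(k)}\big\rangle+\frac{L\gamma^2}{2}\|\bar g^{(k)}\|^2+\frac{L\gamma^2\sigma^2}{2nR}.$$

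To finish I would treat the inner product via the identity $-\langle a,b\rangle=-\tfrac12\|a\|^2-\tfrac12\|b\|^2+\tfrac12\|a-b\|^2$ with $a=\nabla f(\bar x^{(k)})$ and $b=\bar g^{(k)}$, and bound the residual by Jensen's inequality plus smoothness of each $f_i$,
$$\|\nabla f(\bar x^{(k)})-\bar g^{(k)}\|^2\le\frac1n\sum_{i=1}^n\|\nabla f_i(\bar x^{(k)})-\nabla f_i(x_i^{(k)})\|^2\le\frac{L^2}{n}\sum_{i=1}^n\|\bar x^{(k)}-x_i^{(k)}\|^2=\frac{L^2}{n}\|\Pi\vx^{(k)}\|_F^2.$$
Collecting the $\|\bar g^{(k)}\|^2$ contributions as $-\tfrac\gamma2(1-L\gamma)\|\bar g^{(k)}\|^2\le-\tfrac\gamma4\|\bar g^{(k)}\|^2$ — which is precisely where the hypothesis $\gamma\le\frac{1}{2L}$ is invoked — and then taking total expectation gives exactly the claimed bound.

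I do not expect a genuine obstacle here; this is a textbook descent step for the averaged iterate. The two points requiring care are (i) the measurability bookkeeping that certifies $\overline{\tilde g}^{(k)}$ as a conditionally unbiased, variance-$\sigma^2/(nR)$ estimator of $\bar g^{(k)}$ — this relies on the tracking identity $\bar h^{(k)}=\overline{\tilde g}^{(k)}$, hence on the initialization $\tilde\vg^{(0)}=\vh^{(0)}$ built into Algorithm \ref{alg:MG-DSGT}; and (ii) retaining the $-\tfrac\gamma4\|\bar g^{(k)}\|^2$ term explicitly rather than discarding it, since in the proof of Theorem \ref{thm:MG-DSGT-rate-nc} it is needed to absorb a matching $\|\bar g^{(k)}\|^2$ term produced by the consensus-error recursion when $\gamma L/(1-\beta^R)$ is small.
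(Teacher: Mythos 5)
Your proof is correct and is essentially the same argument the paper relies on (it states the lemma as adapted from Lemma~3 of \cite{xin2020improved} without reproducing the details): reduce the averaged iterate to an SGD step via $\one_n^\top W^t=\one_n^\top$ and the tracking identity $\bar h^{(k)}=\overline{\tilde g}^{(k)}$, apply the $L$-smoothness quadratic bound, condition on $\cF_k$ to use unbiasedness and the $\sigma^2/(nR)$ variance bound, split the cross term with $-\langle a,b\rangle=-\tfrac12\|a\|^2-\tfrac12\|b\|^2+\tfrac12\|a-b\|^2$, control $\|\nabla f(\bar x^{(k)})-\bar g^{(k)}\|^2$ by $\tfrac{L^2}{n}\|\Pi\vx^{(k)}\|_F^2$, and absorb the $\|\bar g^{(k)}\|^2$ terms using $\gamma\le\tfrac{1}{2L}$. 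All constants check out against the stated bound.
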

By iterating Lemma \ref{lem:descent} over $k=0,\dots,K$, we obtain
\begin{align}
    &\frac{1}{K+1}\sum_{k=0}^{K}\EE[\|\nabla f(\bar{x}^{(k)})\|^2]\nonumber\\
    \leq& \frac{2\Delta}{\gamma(K+1)}+\frac{\gamma L\sigma^2}{nR}-\frac{1}{2(K+1)}\sum_{k=0}^{K}\EE[\|\bar{g}^{(k)}\|^2]+\frac{L^2}{n(K+1)}\sum_{k=0}^{K}\EE[\|\Pi \vx^{(k)}\|_F^2]\label{eqn:vdisnef}
\end{align}
where $\Delta \geq  f(x^{(0)})-\min_x f(x)$.

We next turn to bound the consensus error $\EE[\|\Pi\vx^{(k)}\|_F^2]$, which relies on the following recursion bound of consensus errors.
\begin{lemma}[\sc Recursion of Consensus Error]\label{lem:consensus}
Under Assumption \ref{asp:nc}, \ref{asp:gd-noise}, \ref{asp:weight-matrix}, denoting $\rho\triangleq\beta^R$, it holds for $0<\gamma\leq \frac{1-\rho^2}{24(1+\rho^2)L}$ that 
\begin{align*}
    \EE[\|\Pi\vx^{(k+1)}\|_F^2]&\leq \frac{2\rho^2}{1+\rho^2}\EE[\|\Pi\vx^{(k)}\|_F^2]+\frac{2\gamma^2\rho^2}{1-\rho^2}\EE[\|\Pi\vh^{(k)}\|_F^2]\\
    \EE[\|\Pi\vh^{(k+1)}\|_F^2]&\leq \frac{36\rho^2L^2}{1-\rho^2}\EE[\|\Pi\vx^{(k)}\|_F^2]+\frac{2\rho^2}{1+\rho^2}\EE[\|\Pi\vh^{(k)}\|_F^2]+\frac{12n\gamma^2\rho^2L^2}{1-\rho^2}\EE[\|\bar{g}^{(k)}\|^2]+6n\frac{\sigma^2}{R}.
\end{align*}
\end{lemma}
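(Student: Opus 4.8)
The plan is to push the averaging projector $\Pi = I - \frac{1}{n}\one_n\one_n^\top$ through the matrix recursions \eqref{eqn:dsgt-iter-1}--\eqref{eqn:dsgt-iter-2} and to exploit the geometric contraction of the compound gossip matrices $\vW^{(k)}_R$. The fact driving everything is that $\Pi$ commutes with each $\vW^{(k)}_R$: since $\one_n^\top W^t = \one_n^\top$ and $W^t\one_n = \one_n$ for every $t$, we have $\Pi\vW^{(k)}_R = \vW^{(k)}_R - \frac{1}{n}\one_n\one_n^\top = (\vW^{(k)}_R - \frac{1}{n}\one_n\one_n^\top)\Pi$, so by \eqref{eqn:MG-inequaliy} and \eqref{submulti},
\[ \|\Pi\vW^{(k)}_R\vz\|_F \le \rho\,\|\Pi\vz\|_F \qquad \text{for all } \vz\in\RR^{n\times d},\quad \rho = \beta^R. \]
Applied to \eqref{eqn:dsgt-iter-1} this gives $\Pi\vx^{(k+1)} = (\vW^{(2k)}_R - \frac{1}{n}\one_n\one_n^\top)(\Pi\vx^{(k)} - \gamma\Pi\vh^{(k)})$, hence $\|\Pi\vx^{(k+1)}\|_F \le \rho\|\Pi\vx^{(k)}\|_F + \gamma\rho\|\Pi\vh^{(k)}\|_F$; squaring, applying the weighted Young inequality $(a+b)^2 \le (1+\epsilon)a^2 + (1+\epsilon^{-1})b^2$ with $\epsilon = \frac{1-\rho^2}{1+\rho^2}$ (so that $\rho^2(1+\epsilon) = \frac{2\rho^2}{1+\rho^2}$ and $\rho^2(1+\epsilon^{-1}) = \frac{2\rho^2}{1-\rho^2}$), and taking expectations yields the first inequality. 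This step needs no restriction on $\gamma$.

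For the second inequality, the same contraction applied to \eqref{eqn:dsgt-iter-2} reduces the task to controlling the gradient increment. Writing $\tilde{\vg}^{(j)} = \nabla f(\vx^{(j)}) + \boldsymbol{\epsilon}^{(j)}$ with $\boldsymbol{\epsilon}^{(j)}$ the round-$j$ stochastic error, I would split $\Pi\vh^{(k+1)}$ into $(\vW^{(2k+1)}_R - \frac{1}{n}\one_n\one_n^\top)\Pi\big(\vh^{(k)} + \nabla f(\vx^{(k+1)}) - \tilde{\vg}^{(k)}\big)$ plus a fresh-noise term proportional to $\boldsymbol{\epsilon}^{(k+1)}$. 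Conditioning on the history $\mathcal{F}_k$ through the query of $\tilde{\vg}^{(k)}$ (so that $\vx^{(k+1)},\vh^{(k)},\nabla f(\vx^{(k+1)}),\boldsymbol{\epsilon}^{(k)}$ are $\mathcal{F}_k$-measurable while $\EE[\boldsymbol{\epsilon}^{(k+1)}\mid\mathcal{F}_k]=0$ and $\EE[\|\boldsymbol{\epsilon}^{(k+1)}\|_F^2\mid\mathcal{F}_k]\le n\sigma^2/R$ by Assumption \ref{asp:gd-noise}) makes the fresh-noise term drop out of every cross product and contribute only an additive $O(n\sigma^2/R)$. For the remaining $\mathcal{F}_k$-measurable part, bound the gossip contraction by $\rho$, then use $\|\Pi\vh^{(k)} + \Pi(\nabla f(\vx^{(k+1)}) - \tilde{\vg}^{(k)})\|_F \le \|\Pi\vh^{(k)}\|_F + \|\nabla f(\vx^{(k+1)}) - \tilde{\vg}^{(k)}\|_F$, then the same weighted Young inequality with $\epsilon = \frac{1-\rho^2}{1+\rho^2}$, and finally $L$-smoothness (Assumption \ref{asp:nc}) together with $\EE\|\boldsymbol{\epsilon}^{(k)}\|_F^2 \le n\sigma^2/R$ to get $\EE\|\nabla f(\vx^{(k+1)}) - \tilde{\vg}^{(k)}\|_F^2 = O\big(L^2\,\EE\|\vx^{(k+1)} - \vx^{(k)}\|_F^2 + n\sigma^2/R\big)$.

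What remains is the one-step displacement. From \eqref{eqn:dsgt-iter-1}, $\vx^{(k+1)} - \vx^{(k)} = (\vW^{(2k)}_R - I)\Pi\vx^{(k)} - \gamma\vW^{(2k)}_R\vh^{(k)}$ (using $(\vW^{(2k)}_R - I)\one_n = 0$); splitting $\vh^{(k)} = \Pi\vh^{(k)} + \one_n(\bar{h}^{(k)})^\top$ with $\bar{h}^{(k)} = \overline{\tilde{g}}^{(k)} = \bar{g}^{(k)}$ and using $\vW^{(2k)}_R\one_n = \one_n$, the contraction estimate gives
\[ \|\vx^{(k+1)} - \vx^{(k)}\|_F \le (1+\rho)\|\Pi\vx^{(k)}\|_F + \gamma\rho\|\Pi\vh^{(k)}\|_F + \gamma\sqrt{n}\,\|\bar{g}^{(k)}\|. \]
Squaring (Cauchy--Schwarz across the three summands), substituting into the smoothness estimate, and then into $\EE\|\Pi\vh^{(k+1)}\|_F^2 \le \frac{2\rho^2}{1+\rho^2}\EE\|\Pi\vh^{(k)}\|_F^2 + \frac{2\rho^2}{1-\rho^2}\EE\|\nabla f(\vx^{(k+1)}) - \tilde{\vg}^{(k)}\|_F^2 + O(n\sigma^2/R)$ produces a bound of the claimed form: an $\EE\|\Pi\vx^{(k)}\|_F^2$ term with coefficient $O\big(\rho^2 L^2/(1-\rho^2)\big)$, an $\EE\|\bar{g}^{(k)}\|^2$ term with coefficient $\frac{12n\gamma^2\rho^2 L^2}{1-\rho^2}$, the contraction term $\frac{2\rho^2}{1+\rho^2}\EE\|\Pi\vh^{(k)}\|_F^2$, and an $O(n\sigma^2/R)$ variance term. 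The step-size constraint $0 < \gamma \le \frac{1-\rho^2}{24(1+\rho^2)L}$ enters exactly at this substitution: the $\gamma\rho\|\Pi\vh^{(k)}\|_F$ piece of the displacement, once multiplied by the $\frac{\rho^2 L^2}{1-\rho^2}$ factor, contributes a spurious $O\big(\gamma^2\rho^4 L^2/(1-\rho^2)\big)$ term back onto the $\EE\|\Pi\vh^{(k)}\|_F^2$ coefficient, and the bound on $\gamma$ makes it small enough to be absorbed into the slack of the Young step so that the clean factor $\frac{2\rho^2}{1+\rho^2}$ survives; it also fixes the numerical constants $36$ and $12$ in the statement.

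The step I expect to be the real obstacle is the stochastic bookkeeping in the $\vh$-recursion. Because $\vh^{(k)}$, and through it $\vx^{(k+1)}$, already carries the round-$k$ error $\boldsymbol{\epsilon}^{(k)}$, one cannot strip $\boldsymbol{\epsilon}^{(k)}$ out by conditioning the way one can for $\boldsymbol{\epsilon}^{(k+1)}$; and a careless route --- for instance passing $\boldsymbol{\epsilon}^{(k)}$ through the $(1+\epsilon^{-1})$ side of a Young split, or bounding $\|\Pi\vh^{(k)} - \Pi\tilde{\vg}^{(k)}\|_F \le \|\Pi\vh^{(k)}\|_F + \|\Pi\tilde{\vg}^{(k)}\|_F$ --- would leave either a spurious $(1-\rho^2)^{-1}$ on the variance or a data-heterogeneity quantity $\|\Pi\nabla f(\vx^{(k)})\|_F$ that is absent from the statement. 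Threading the variance bound so that its coefficient stays a pure constant while the $\EE\|\Pi\vh^{(k)}\|_F^2$ coefficient lands exactly on the contraction factor $\frac{2\rho^2}{1+\rho^2}$ is the delicate part; everything else is routine norm algebra and Cauchy--Schwarz.
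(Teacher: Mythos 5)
Your treatment of the $\vx$-recursion and the overall skeleton (commuting $\Pi$ past the compound gossip matrices, the contraction factor $\rho=\beta^R$, the weighted Young split with $\epsilon=\frac{1-\rho^2}{1+\rho^2}$, the one-step displacement bound) coincide with the paper's, and the first inequality is fine. The gap is exactly where you predicted it, and your outline does not close it: in the $\vh$-recursion you bound $\|\Pi\vW^{(2k+1)}_R(\vh^{(k)}+\nabla f(\vx^{(k+1)})-\tilde{\vg}^{(k)})\|_F$ by a triangle inequality followed by the same weighted Young split, which places the entire increment $\nabla f(\vx^{(k+1)})-\tilde{\vg}^{(k)}$ --- including the round-$k$ noise $\boldsymbol{\epsilon}^{(k)}$ --- on the $(1+\epsilon^{-1})=\frac{2}{1-\rho^2}$ side. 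Since $\EE\|\nabla f(\vx^{(k+1)})-\tilde{\vg}^{(k)}\|_F^2$ contains a genuine $n\sigma^2/R$ contribution that cannot be absorbed into $L^2\EE\|\vx^{(k+1)}-\vx^{(k)}\|_F^2$ (the noise $\boldsymbol{\epsilon}^{(k)}$ is not conditionally mean-zero given $\vx^{(k+1)}$, which depends on it through $\vh^{(k)}$), this route yields a variance coefficient of order $\frac{n\rho^2}{1-\rho^2}$ rather than the pure constant $6n$ claimed in the lemma. You correctly name this as ``the delicate part'' but do not supply the idea that resolves it, so the proposal as written only proves a weaker statement.

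The paper's resolution is to avoid any Young split on the full increment. It expands $\|\Pi\vh^{(k+1)}\|_F^2$ exactly into contraction, cross, and quadratic terms, replaces $\tilde{\vg}^{(k+1)}$ by $\nabla f(\vx^{(k+1)})$ in the cross term by conditioning (your fresh-noise step), and then splits the cross term into a noise part $\nabla f(\vx^{(k)})-\tilde{\vg}^{(k)}$ and a smoothness part $\nabla f(\vx^{(k+1)})-\nabla f(\vx^{(k)})$. The noise cross term is handled by conditioning one level deeper: writing $\vh^{(k)}=\vW^{(2k-1)}_R(\vh^{(k-1)}+\tilde{\vg}^{(k)}-\tilde{\vg}^{(k-1)})$ and using $\EE[\nabla f(\vx^{(k)})-\tilde{\vg}^{(k)}\mid\vh^{(k-1)},\tilde{\vg}^{(k-1)}]=0$, the only component of $\vh^{(k)}$ correlated with the round-$k$ noise is $\vW^{(2k-1)}_R\tilde{\vg}^{(k)}$, so the cross term collapses to $\EE[\langle \Pi\vW^{(2k+1)}_R\vW^{(2k-1)}_R(\tilde{\vg}^{(k)}-\nabla f(\vx^{(k)})),\Pi\vW^{(2k+1)}_R(\nabla f(\vx^{(k)})-\tilde{\vg}^{(k)})\rangle_F]\leq n\rho^2\sigma^2/R$, a variance term with constant coefficient and no $(1-\rho^2)^{-1}$ amplification (this is \eqref{eqn:vnsvnsoo}). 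Only the smoothness cross term receives Young's inequality, with small parameters $\eta_1,\eta_2=\Theta(1-\rho^2)$, so that the $(1-\rho^2)^{-1}$ amplification lands on $L^2$-displacement terms (tamed by $\gamma\leq\frac{1-\rho^2}{24(1+\rho^2)L}$) and on an $\eta_1^{-1}\gamma^2\rho^2L^2\sigma^2/R$ term that the same step-size bound reduces to $O(n\sigma^2/R)$; the quadratic term $\rho^2\EE\|\tilde{\vg}^{(k+1)}-\tilde{\vg}^{(k)}\|_F^2$ carries its variance with a plain constant. This two-level conditioning on the noise cross term is the missing ingredient in your argument.
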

\begin{proof}
Multiplying $\Pi$ to the left side of \eqref{eqn:dsgt-iter-1} and \eqref{eqn:dsgt-iter-2}, we have 
\begin{align}
    \Pi\vx^{(k+1)}&=\Pi\vW^{(2k)}_R(\vx^{(k+1)}-\gamma\vh^{(k)}),\label{eqn:dsgt-iter-11}\\
    \Pi\vh^{(k+1)}&=\Pi\vW^{(2k+1)}_R(\vh^{(k)}+\tilde{\vg}^{(k+1)}-\tilde{\vg}^{(k)}).\label{eqn:dsgt-iter-21}
\end{align}
Therefore, following \eqref{eqn:dsgt-iter-11}, by using $\|\Pi\vW^{(2k)}_R\va\|_F\leq \rho\|\Pi\va\|_F$ for any $\va\in\RR^{n\times n}$ and $-\langle \va, \vb\rangle\leq \frac{1-\rho^2}{1+\rho^2}\|\va\|_F^2+\frac{1+\rho^2}{1-\rho^2}\|\vb\|_F^2$ for any $\va,\vb\in\RR^{n\times n}$, we have 
\begin{align*}
    \|\Pi\vx^{(k+1)}\|_F^2=&\|\Pi\vW^{(2k)}_R\vx^{(k)}\|_F^2-2\gamma\langle \Pi\vW^{(2k)}_R\vx^{(k)},\Pi\vW^{(2k)}_R \vh^{(k)}\rangle_F+\gamma^2\|\Pi\vW^{(2k)}_R\vh^{(k)}\|_F^2\\
    \leq & \rho^2\|\Pi\vx^{(k)}\|_F^2+\frac{\rho^2(1-\rho^2)}{1+\rho^2}\|\Pi\vx^{(k)}\|_F^2+\frac{\gamma^2\rho^2(1+\rho^2)}{1-\rho^2}\|\Pi\vh^{(k)}\|_F^2+\gamma^2\rho^2 \|\Pi\vh^{(k)}\|_F^2\\
    =&\frac{2\rho^2}{1+\rho^2}\|\Pi\vx^{(k)}\|_F^2+\frac{2\gamma^2\rho^2}{1-\rho^2}\|\Pi\vh^{(k)}\|_F^2.
\end{align*}
Following \eqref{eqn:dsgt-iter-21},  we can bound $\|\Pi\vh^{(k+1)}\|_F^2$ as follows:
\begin{align}
    \EE[\|\Pi\vh^{(k+1)}\|_F^2]=&\EE[\|\Pi\vW^{(2k+1)}_R\vh^{(k)}\|_F^2]+2\EE[\langle \Pi\vW^{(2k+1)}_R \vh^{(k)},\Pi\vW^{(2k+1)}_R(\tilde{\vg}^{(k+1)}-\tilde{\vg}^{(k)})\rangle_F]\nonumber\\
    &\quad +\EE[\|\Pi\vW^{(2k+1)}_R(\tilde{\vg}^{(k+1)}-\tilde{\vg}^{(k)})\|_F^2]\nonumber\\
    \leq &\rho^2\EE[\|\Pi\vh^{(k)}\|_F^2]+2\EE[\langle \Pi\vW^{(2k+1)}_R \vh^{(k)},\Pi\vW^{(2k+1)}_R(\nabla f(\vx^{(k+1)})-\tilde{\vg}^{(k)})\rangle_F]\nonumber\\
    &\quad +\rho^2\EE[\|\Pi(\tilde{\vg}^{(k+1)}-\tilde{\vg}^{(k)})\|_F^2]\nonumber\\
    = &\rho^2\EE[\|\Pi\vh^{(k)}\|_F^2]+2\EE[\langle \Pi\vW^{(2k+1)}_R \vh^{(k)},\Pi\vW^{(2k+1)}_R(\nabla f(\vx^{(k)})-\tilde{\vg}^{(k)})\rangle_F]\nonumber\\
    &\quad +2\EE[\langle \Pi\vW^{(2k+1)}_R \vh^{(k)},\Pi\vW^{(2k+1)}_R(\nabla f(\vx^{(k+1)})-\nabla f(\vx^{(k)}))\rangle_F]\nonumber\\
    &\quad +\rho^2\EE[\|\tilde{\vg}^{(k+1)}-\tilde{\vg}^{(k)}\|_F^2]\label{eqn:vinreve}
\end{align}
where the inequality follows $\|\Pi\vW^{(2k)}_R\va\|_F\leq \rho\|\Pi\va\|_F\leq \rho\|\va\|_F$ and $\EE[\tilde{\vg}^{(k+1)}\mid \vh^{(k)},\tilde{\vg}^{(k)}]=\nabla f(\vx^{(k+1)})$.
We next bound the terms in \eqref{eqn:vinreve} one by one.
By using the similar derivation to \cite[Lemma 5]{xin2020improved}, we can easily reach
\begin{align}
    \EE[\|\tilde{\vg}^{(k+1)}-\tilde{\vg}^{(k)}\|_F^2]=&\EE[\|\nabla f(\vx^{(k+1)})-\tilde{\vg}^{(k)}\|_F^2]+\EE[\|\tilde{\vg}^{(k+1)}-\nabla f(\vx^{(k+1)})\|_F^2]\nonumber\\
    \leq &2\EE[\|\nabla f(\vx^{(k+1)})-\nabla f(\vx^{(k)})\|_F^2]+2\EE[\|\nabla f(\vx^{(k)})-\tilde{\vg}^{(k)}\|_F^2]+\frac{n\sigma^2}{R}\nonumber\\
    \leq &2L^2\EE[\|\vx^{(k+1)}-\vx^{(k)}\|_F^2]+\frac{3n\sigma^2}{R}\label{eqn:voenoew}
\end{align}
and 
\begin{align}
    \EE[\|\vx^{(k+1)}-\vx^{(k)}\|_F^2]\leq &3\EE[\|\Pi\vx^{(k+1)}\|_F^2]+3\EE[\|\Pi\vx^{(k)}\|_F^2]+3\EE[\|\bar{\vx}^{(k+1)}-\bar{\vx}^{(k)}\|^2_F]\nonumber\\
    \leq &3\EE[\|\Pi\vx^{(k+1)}\|_F^2]+3\EE[\|\Pi\vx^{(k)}\|_F^2]+3\gamma^2\EE[\|\overline{\tilde{\vg}}^{(k)}\|_F^2]\nonumber\\
    \leq &9\EE[\|\Pi\vx^{(k)}\|_F^2]+6\gamma^2\rho^2 \EE[\|\Pi\vh^{(k)}\|_F^2]+3n\gamma^2\EE[\|\bar{g}^{(k)}\|^2]+\frac{3\gamma^2\sigma^2}{R}\label{eqn:voenoew1}
\end{align}
where we use $\bar{\vx}^{(k+1)}-\bar{\vx}^{(k)}=-\gamma \overline{\tilde{\vg}}^{(k)}$ and $\EE[\|\overline{\tilde{\vg}}^{(k)}\|_F^2]\leq \EE[\|\bar{\vg}^{(k)}\|_F^2]+{\sigma^2}/{R}=n\EE[\|\bar{g}^{(k)}\|^2]+{\sigma^2}/{R}$.   Combining \eqref{eqn:voenoew} and \eqref{eqn:voenoew1} together, we reach 
\begin{align}
    &\EE[\|\tilde{\vg}^{(k+1)}-\tilde{\vg}^{(k)}\|_F^2]\nonumber\\
    \leq& 18L^2\EE[\|\Pi\vx^{(k)}\|_F^2]+12\gamma^2\rho^2L^2 \EE[\|\Pi\vh^{(k)}\|_F^2]+6n\gamma^2L^2\EE[\|\bar{g}^{(k)}\|^2]+(3n+6n\gamma^2L^2)\frac{\sigma^2}{R}.\label{eqn:vnsvnso}
\end{align}
We next turn to bound $\EE[\langle \Pi\vW^{(2k+1)}_R \vh^{(k)},\Pi\vW^{(2k+1)}_R(\nabla f(\vx^{(k)})-\tilde{\vg}^{(k)})\rangle_F]$ in \eqref{eqn:vinreve}. For any $k\geq 1$, since $\vh^{(k)}=\vW_R^{(2k-1)R}(\vh^{(k-1)}+\tilde{\vg}^{(k)}-\tilde{\vg}^{(k-1)})$, $\EE[\nabla f(\vx^{(k)})-\tilde{\vg}^{(k)}\mid \vh^{(k-1)},\tilde{\vg}^{(k-1)}]=0$, we reach
\begin{align*}
    &\EE[\langle \Pi\vW^{(2k+1)}_R \vh^{(k)},\Pi\vW^{(2k+1)}_R(\nabla f(\vx^{(k)})-\tilde{\vg}^{(k)})\rangle_F]\\
    =&\EE[\langle \Pi\vW^{(2k+1)}_R\vW^{(2k-1)}_R \tilde{\vg}^{(k)},\Pi\vW^{(2k+1)}_R(\nabla f(\vx^{(k)})-\tilde{\vg}^{(k)})\rangle_F]\\
    =&\EE[\langle \Pi\vW^{(2k+1)}_R\vW^{(2k-1)}_R (\tilde{\vg}^{(k)}-\nabla f(\vx^{(k)})),\Pi\vW^{(2k+1)}_R(\nabla f(\vx^{(k)})-\tilde{\vg}^{(k)})\rangle_F].
\end{align*}
Since 
\begin{equation*}
    \left\|\left(\Pi\vW^{(2k+1)}_R\vW^{(2k-1)}_R\right)^\top\Pi \vW^{(2k+1)}_R\right\|_2=\left\| \left(\vW^{(2k+1)}_R\vW^{(2k-1)}_R\right)^\top\vW^{(2k+1)}_R-\frac{1}{n}\one_n\one_n^\top\right\|_2\leq \rho^3,
\end{equation*}
we further have
\begin{equation}
    \EE[\langle \Pi\vW^{(2k+1)}_R \vh^{(k)},\Pi\vW^{(2k+1)}_R(\nabla f(\vx^{(k)})-\tilde{\vg}^{(k)})\rangle_F]\leq \EE[\|\nabla f(\vx^{(k)})-\tilde{\vg}^{(k)}\|_F^2]\leq \frac{n\rho^2\sigma^2}{R}.\label{eqn:vnsvnsoo}
\end{equation}
It is easy to see that \eqref{eqn:vnsvnsoo} also holds for $k=0$.
We finally bound the last term \\$\EE[\langle \Pi\vW^{(2k+1)}_R \vh^{(k)},\Pi\vW^{(2k+1)}_R(\nabla f(\vx^{(k+1)})-\nabla f(\vx^{(k)}))\rangle_F]$ in \eqref{eqn:vinreve}. Since $\|\Pi\vW_R^{(2k+1)}\va\|_F\leq \rho\|\va\|_F$ for any $\va\in\RR^{n\times d}$, we have 
\begin{align}
    &\EE[\langle \Pi\vW^{(2k+1)}_R \vh^{(k)},\Pi\vW^{(2k+1)}_R(\nabla f(\vx^{(k+1)})-\nabla f(\vx^{(k)}))\rangle_F]\leq  \rho^2L\EE[\|\Pi\vh^{(k)}\|_F\|\vx^{(k+1)}-\vx^{(k)}\|_F]\nonumber\\
    \leq &\rho^2L\EE\left[\|\Pi\vh^{(k)}\|_F\left(\|\Pi\vx^{(k+1)}\|_F+\|\Pi\vx^{(k)}\|_F+\|\bar{\vx}^{(k+1)}-\bar{\vx}^{(k)}\|_F\right)\right]\nonumber\\
    \leq &\rho^2L\EE\left[\|\Pi\vh^{(k)}\|_F\left(2  \|\Pi\vx^{(k)}\|_F+\gamma\rho\|\Pi\vh^{k}\|_F+\gamma\|\overline{\tilde{\vg}}^{(k)}\|_F\right)\right]\label{eqn:nviewwvd}
\end{align}
where we us $\|\Pi\vx^{(k+1)}\|_F\leq \rho \|\Pi\vx^{(k)}\|_F+\gamma\rho\|\Pi\vh^{(k)}\|_F$ and $\bar{\vx}^{(k+1)}-\bar{\vx}^{(k)}=-\gamma \overline{\tilde{\vg}}^{(k)}$ in the last inequality.
By Young's inequality, we have for any $\eta_1,\eta_2> 0$  that
\begin{align}
    &\EE[\rho\|\Pi\vh^{(k)}\|_F\gamma\rho L\|\overline{\tilde{\vg}}^{(k)}\|_F]\nonumber\\
    \leq& 0.5\eta_1\rho^2\EE[\|\Pi\vh^{(k)}\|_F^2]+0.5\eta_1^{-1}\gamma^2\rho^2L^2\EE[\|\overline{\tilde{\vg}}^{(k)}\|_F^2]\nonumber\\
    \leq &0.5\eta_1\rho^2\EE[\|\Pi\vh^{(k)}\|_F^2]+0.5\eta_1^{-1}\gamma^2\rho^2L^2n\EE[\|{\bar{g}}^{(k)}\|^2]+0.5\eta_1^{-1}\gamma^2\rho^2L^2   \frac{\sigma^2}{R}\label{eqn:vdinviw1}
\end{align}
and 
\begin{align}
    2\EE[\rho\|\Pi\vh^{(k)}\|_F\rho L\|\Pi{\vx}^{(k)}\|_F]\leq& \eta_2\rho^2\EE[\|\Pi\vh^{(k)}\|]+\eta_2^{-1}\rho^2L^2\EE[\|\Pi\vx^{(k)}\|_F].\label{eqn:vdinviw2}
\end{align}
Plugging \eqref{eqn:vdinviw1} and \eqref{eqn:vdinviw2} into \eqref{eqn:nviewwvd}, we have
\begin{align}
    &\EE[\langle \Pi\vW^{(2k+1)}_R \vh^{(k)},\Pi\vW^{(2k+1)}_R(\nabla f(\vx^{(k+1)})-\nabla f(\vx^{(k)}))\rangle_F]\nonumber\\
    \leq&  \rho^2(\gamma\rho L+0.5\eta_1+\eta_2)\EE[\|\Pi\vh^{(k)}\|_F^2]+\eta_2^{-1}\rho^2L^2\EE[\|\Pi\vx^{(k)}\|_F^2]\nonumber\\
    &\quad +0.5\eta_1^{-1}\gamma^2\rho^2L^2n\EE[\|\bar{g}^{(k)}\|^2]+0.5\eta_1^{-1}\gamma^2\rho^2L^2   \frac{\sigma^2}{R}.\label{eqn:vnsvnsooo}
\end{align}
Plugging \eqref{eqn:vnsvnso}, \eqref{eqn:vnsvnsoo}, and \eqref{eqn:vnsvnsooo} into \eqref{eqn:vinreve}, we reach 
\begin{align}
    \EE[\|\Pi\vh^{(k+1)}\|_F^2]\leq &\rho^2(1+12\gamma^2\rho^2L^2+2\gamma\rho L +\eta_1+2\eta_2)\EE[\|\Pi\vh^{(k)}\|_F^2]\nonumber\\
    &\quad +\rho^2L^2(18+2\eta_2^{-1})\EE[\|\Pi\vx^{(k)}\|_F^2\nonumber\\
    &\quad +n\gamma^2\rho^2L^2(6+\eta_1^{-1})\EE[\|\bar{g}^{(k)}\|^2]+(5\rho^2n+2n\gamma^2\rho^2L^2+\eta_1^{-1}\gamma^2\rho^2L^2)\frac{\sigma^2}{R}.\label{eqn:vnidsnvw}
\end{align}
Letting $\eta_1=\frac{2(1-\rho^2)}{9(1+\rho^2)}$ and $\eta_2=\frac{1-\rho^2}{9(1+\rho^2)}$, then it holds for any $0\leq \gamma\leq \frac{1-\rho^2}{24(1+\rho^2) L}$ that 
\begin{align*}
    \rho^2(1+12\gamma^2\rho^2L^2+2\gamma\rho L +\eta_1+2\eta_2)&\leq \frac{2\rho^2}{1+\rho^2}\\
    \rho^2L^2(18+2\eta_2^{-1})&\leq \frac{36\rho^2L^2}{1-\rho^2   }\\
    n\gamma^2\rho^2L^2(6+\eta_1^{-1})&\leq \frac{12n\gamma^2\rho^2L^2}{1-\rho^2}\\
    5\rho^2n+2n\gamma^2\rho^2L^2+\eta_1^{-1}\gamma^2\rho^2L^2&\leq 6n,
\end{align*}
which, combined with \eqref{eqn:vnidsnvw}, leads to the conclusion.
\end{proof}

Letting $a^{(k)}\triangleq[\frac{1}{n}\EE[\|\Pi\vx^{(k)}\|_F^2], \frac{1}{nL^2}\EE[\|\Pi\vh^{(k)}\|_F^2]]^\top\in\RR^{2}$, $b^{(k)}\triangleq[0,\frac{12\gamma^2\rho^2}{1-\rho^2}\EE[\|\bar{g}^{(k)}\|_F^2]+\frac{6\sigma^2}{RL^2}]^\top\in\RR^2$ for any $k\geq 0$, and 
\begin{equation*}
    M\triangleq\begin{bmatrix}
    \frac{2\rho^2}{1+\rho^2} &\frac{2\gamma^2\rho^2L^2}{1-\rho^2}\\
    \frac{36\rho^2}{1-\rho^2}&\frac{2\rho^2}{1+\rho^2}
    \end{bmatrix},
\end{equation*}
by Lemma \ref{lem:consensus}, it holds that
\begin{equation*}
    a^{(k+1)}\preceq Ma^{(k)}+b^{(k)}
\end{equation*}
where $\preceq$ indicates entry-wise inequality.
Since $\gamma\leq \frac{(1-\rho^2)^2}{9\rho^2(1+\rho^2)L}$, one can check that there exists $v_1,v_2\geq 0$ such that $M[v_1,v_2]^\top \prec [v_1,v_2]^\top$. Therefore,
by \cite[Lemma 9]{xin2020improved}, we have for any $k\geq 0$ that
\begin{equation*}
    \sum_{\ell=0}^kM^k\preceq (I_{2\times 2}-M)^{-1}\preceq \begin{bmatrix}
    \frac{9(1+\rho^2)}{(1-\rho^2)} &\frac{18\gamma^2\rho^2(1+\rho^2)^2L^2}{(1-\rho^2)^3}\\
    \frac{324\rho^2(1+\rho^2)^2}{(1-\rho^2)^3}&
     \frac{9(1+\rho^2)}{(1-\rho^2)}
    \end{bmatrix}.
\end{equation*}
Therefore, we reach
\begin{align*}
    \sum_{k=0}^{K}a^{(k)}\preceq& \sum_{k=0}^{K}\left(M^ka^{(0)}+\sum_{\ell=0}^{k-1}M^{\ell}b^{(k-1-\ell)}\right)\\
    \preceq& \sum_{k=0}^\infty M^k\left(a^{(0)}+\sum_{k=0}^{K-1}b^{(k)}\right)\\
    \preceq &(I_{2\times 2}-M)^{-1}\left(a^{(0)}+\sum_{k=0}^{K-1}b^{(k)}\right).
\end{align*}
Since $a^{(0)}=0$ by our initialization, considering the first entry of the above, we have
\begin{equation}\label{eqn:bowgdsf}
        \frac{L^2}{n(K+1)}\sum_{k=0}^{K}\EE[\|\Pi\vx^{(k)}\|_F^2]\leq \frac{216\gamma^4\rho^4(1+\rho^2)^2L^4}{(1-\rho^2)^4(K+1)}\sum_{k=0}^K\EE[\|\bar{g}^{(k)}\|_F^2]+\frac{108\gamma^2\rho^2(1+\rho^2)^2L^2\sigma^2}{(1-\rho^2)^3R}.
\end{equation}
When $\gamma \leq \frac{1-\rho^2}{5\rho \sqrt{1+\rho^2}L}$, 
\begin{equation*}
    \frac{216\gamma^4\rho^4(1+\rho^2)^2L^4}{(1-\rho^2)^4(K+1)}\leq \frac{1}{2(K+1)}.
\end{equation*}
Hence, plugging \eqref{eqn:bowgdsf} into \eqref{eqn:vdisnef} yields 
\begin{align*}
    \frac{1}{K+1}\sum_{k=0}^{K}\EE[\|\nabla f(\bar{x}^{(k)})\|^2]\leq \frac{2\Delta}{\gamma(K+1)}+\frac{\gamma L\sigma^2}{nR}+\frac{108\gamma^2\rho^2(1+\rho^2)^2L^2\sigma^2}{(1-\rho^2)^3R}.
\end{align*}
Plugging
\begin{align}
    \gamma=&\min\left\{\frac{1}{2L}, \frac{1-\rho^2}{24(1+\rho^2)L},\frac{(1-\rho^2)^2}{9\rho^2(1+\rho^2)L},\frac{1-\rho^2}{5\rho \sqrt{1+\rho^2}L},\left(\frac{(1-\rho^2)^3R\Delta}{108 \rho^2(1+\rho^2)^2L^2\sigma^2(K+1)}\right)^\frac{1}{3}\right\}\label{eqn:lr}\\
    =&\Theta\left(\min\left\{\frac{1-\rho}{L},\frac{(1-\rho)^2}{\rho^2 L},\left(\frac{(1-\rho)^3R^2\Delta}{ \rho^2L^2\sigma^2T}\right)^\frac{1}{3}\right\}\right)\nonumber
\end{align}
 and $T=KR$ into the above, we reach
\begin{equation*}
     \frac{1}{K+1}\sum_{k=0}^{K}\EE[\|\nabla f(\bar{x}^{(k)})\|^2]=O\left(\left(\frac{\Delta L\sigma^2}{nT}\right)^\frac{1}{2}+\frac{R\Delta L}{T}+\left(\frac{\rho^2\Delta^2 L^2R\sigma^2}{(1-\rho)^3T^2}\right)^\frac{1}{3}+\frac{\rho^2R\Delta L}{T(1-\rho)^2}\right).
\end{equation*}
Furthermore, if one set 
\begin{equation}\label{eqn:R}
    R=\frac{1}{1-\beta}\max\left\{\ln(2),\ln\left(\frac{n^\frac{3}{4}L^\frac{1}{4}\Delta^\frac{1}{4}}{T^\frac{1}{4}(1-\beta)^\frac{1}{2}\sigma^\frac{1}{2}}\right)\right\}=\tilde{O}\left(\frac{1}{1-\beta}\right),
\end{equation}
so that 
\begin{equation*}
    \rho =\beta^R\leq e^{-(1-\beta)R}\leq \min\left\{\frac{1}{2},\frac{T^\frac{1}{4}(1-\beta)^\frac{1}{2}\sigma^\frac{1}{2}}{n^\frac{3}{4}L^\frac{1}{4}\Delta^\frac{1}{4}}\right\},
\end{equation*}
then we obtain 
\begin{equation*}
    \frac{1}{K+1}\sum_{k=0}^{K}\EE[\|\nabla f(\bar{x}^{(k)})\|^2]=\tilde{O}\left(\left(\frac{\Delta L\sigma^2}{nT}\right)^\frac{1}{2}+\frac{\Delta L}{T(1-\beta)}\right).
\end{equation*}

\end{document}